\newcommand{\oea}{\mbox{$(1 + 1)$~EA}\xspace}
\newcommand{\oplea}{\mbox{$(1+\lambda)$~EA}\xspace}
\newcommand{\mpoea}{\mbox{$(\mu+1)$~EA}\xspace}
\newcommand{\mplea}{\mbox{$(\mu+\lambda)$~EA}\xspace}
\newcommand{\ollga}{${(1 + (\lambda , \lambda))}$~GA\xspace}
\newcommand{\onemax}{\textsc{OneMax}\xspace}
\newcommand{\leadingones}{\textsc{LeadingOnes}\xspace}
\newcommand{\maxsat}{\textsc{MAX-3SAT}\xspace}
\newcommand{\om}{\textsc{OM}\xspace}
\newcommand{\jump}{\textsc{Jump}\xspace}
\newcommand{\N}{{\mathbb N}}
\newcommand{\R}{{\mathbb R}}
\newcommand{\revise}[1]{#1}
\DeclareMathOperator{\Bin}{Bin}
\DeclareMathOperator*{\argmax}{arg\,max}
\newtheorem{theorem}{Theorem}
\newtheorem{lemma}[theorem]{Lemma}
\newtheorem{corollary}[theorem]{Corollary}
\newcommand{\eps}{\varepsilon} 
\renewcommand*{\@fnsymbol}[1]{\ensuremath{\ifcase#1\or\dagger\or*\else\@arabic{#1}\fi}}
\let\originalleft\left
\let\originalright\right
\renewcommand{\left}{\mathopen{}\mathclose\bgroup\originalleft}
\renewcommand{\right}{\aftergroup\egroup\originalright}
\title{A Rigorous Runtime Analysis of the $(1 + (\lambda, \lambda))$ GA on Jump Functions\thanks{Extended version of the paper~\cite{AntipovDK20} in the proceedings of GECCO \revise{2020}. This version contains all proofs and other details that had to be omitted in the conference version for reasons of space. Also, we have added a new section which proves the lower bounds.}}
\author{Denis Antipov\thanks{Corresponding author} \\
		ITMO University \\
  		St. Petersburg, Russia \\
  		and \\
		Laboratoire d'Informatique (LIX), \\
		CNRS, \'Ecole Polytechnique, \\ 
		Institut Polytechnique de Paris \\
    Palaiseau, France \\
    antipovden@yandex.ru 
		\and
		Benjamin Doerr \\
		Laboratoire d'Informatique (LIX), \\
		CNRS, \'Ecole Polytechnique, \\ 
		Institut Polytechnique de Paris \\
    Palaiseau, France \\
    lastname@lix.polytechnique.de 
		\and
		Vitalii Karavaev\\
		ITMO University \\
    St. Petersburg, Russia \\
    fkve97@gmail.com 
}
\begin{document}

\maketitle

{\sloppy

\begin{abstract}
    The $(1 + (\lambda,\lambda))$ genetic algorithm is a younger evolutionary algorithm trying to profit also from inferior solutions. Rigorous runtime analyses on unimodal fitness functions showed that it can indeed be faster than classical evolutionary algorithms, though on these simple problems the gains were only moderate.

    In this work, we conduct the first runtime analysis of this algorithm on a multimodal problem class, the jump functions benchmark. We show that with the right parameters, the \ollga optimizes any jump function with jump size $2 \le k \le n/4$ in expected time $O(n^{(k+1)/2} e^{O(k)} k^{-k/2})$, which significantly and already for constant~$k$ outperforms standard mutation-based algorithms with their $\Theta(n^k)$ runtime and standard crossover-based algorithms with their $\revise{\tilde{O}}(n^{k-1})$ runtime guarantee.
    
    For the isolated problem of leaving the local optimum of jump functions, we determine provably optimal parameters that lead to a runtime of $(n/k)^{k/2} e^{\Theta(k)}$. This suggests some general advice on how to set the parameters of the \ollga, which might ease the further use of this algorithm.
\end{abstract}

\section{Introduction}

The $(1 + (\lambda,\lambda))$ genetic algorithm, \ollga for short, is a relatively new genetic algorithm, first proposed at GECCO 2013~\cite{DoerrDE13}, that tries to increase the rate of exploration by a combination of mutation with a high mutation rate, an intermediate selection, and crossover as mechanism to repair the possible negative effects of the aggressive mutation. For this algorithm, moderate runtime advantages over classic algorithms have been proven for unimodal~\cite{DoerrDE15,DoerrD18} or close-to-unimodal~\cite{BuzdalovD17} problems; also some positive experimental results exist~\cite{GoldmanP14,MironovichB17}.

In this work, we conduct the first mathematical runtime analysis for the \ollga optimizing a multimodal optimization problem, namely the classic jump functions benchmark. We observe that the combination of aggressive mutation with crossover as repair mechanism works even better here: The \ollga can optimize jump functions with gap size $k \le n/4$ in expected time at most \[n^{(k+1)/2} e^{O(k)} k^{-k/2},\] which is almost the square root of the $\Omega(n^{k})$ runtime many classic mutation-based algorithms have. To obtain this performance, however, the parameters of the algorithm have to be set differently from what previous works recommend.

\subsection{The \ollga}
\label{sec:intro-ollea}


Noting that many classic evolutionary algorithms do not profit a lot from inferior solution, whereas algorithms witnessing the black-box complexity~\cite{DrosteJW06} (see also~\cite{DoerrD20bookchapter} for a recent survey), massively do, Doerr, Doerr, and Ebel~\cite{DoerrDE13} proposed an algorithm which tries to gain some insight also from solutions inferior to the current-best solution. 

The main working principle of their algorithm, which was called \ollga, is as follows. From a unique parent individual $x$, first $\lambda$ offspring are created using standard bit mutation with a relatively high mutation rate $p$, \revise{but in a way that all offspring have equal Hamming distance to~$x$ (this can be realized, for example, by first sampling a number $\ell$ from a binomial distribution with parameters $n$ and $p$ and then generating all offspring by flipping exactly $\ell$ random bits in $x$)}. When the parent is already close to the optimum, these most likely are all worse than the parent. The hope set into the \ollga is that nevertheless some mutation offspring, besides all destruction from the aggressive mutation, has also made some progress. To distill such progress the \ollga selects a mutation offspring $x'$ with maximal fitness and creates from it, $\lambda$~times independently, an offspring via a biased uniform crossover with the parent $x$. This biased crossover inherits bits from $x'$ only with some small probability~$c$, so that, hopefully, all the destruction caused by the aggressive mutation is repaired. The best of these crossover offspring in an elitist selection competes with $x$ for becoming the parent of the next iteration. The recommendation in previous works was to use a crossover bias of $c = \frac{1}{p n}$. With this parameterization, a single application of mutation and crossover with the parent, without intermediate selection, would create an offspring distributed as if generated via standard bit mutation with mutation rate $\frac 1n$. Note that $\frac 1n$ is \revise{a common} recommendation for the mutation rate in standard bit mutation~\cite{Back93,Muhlenbein92,Witt13}.

Via a rigorous runtime analysis on the \onemax benchmark function it was shown~\cite{DoerrDE15} that the basic idea of the \ollga indeed can work. When the crossover biased is set to $c = \frac{1}{p n}$ as recommended, then the expected runtime (number of fitness evaluations) of the \ollga with any mutation rate $p \ge \frac 2n$ and offspring population size $\lambda \ge 2$ is 
\[
O((\tfrac 1 {p n} + \tfrac 1\lambda) n \log n + (p n + \lambda) n).
\]
Hence any choice of $p  \in \omega(\frac 1n) \cap o(\frac{\log n}{n})$ and $\lambda \in \omega(1) \cap o(\log n)$ yields a runtime asymptotically faster than the runtime $\Omega(n \log n)$ observed by many classic evolutionary algorithms, e.g., by the \oea~\cite{Muhlenbein92}, the \oplea~\cite{JansenJW05}, the \mpoea~\cite{Witt06}, the \mplea~\cite{AntipovD21algo}, and in fact any unary unbiased black-box algorithm~\cite{LehreW12}. The choice $p = \frac{\sqrt{\log n}}{n}$ and $\lambda = \sqrt{\log n}$ minimizes the runtime guarantee above and shows an expected runtime of $O(n \sqrt{\log n})$. With \revise{a fitness-dependent~\cite{DoerrDE15}, self-adjusting~\cite{DoerrD18}, or heavy-tailed} random parameter choice~\cite{AntipovBD20gecco}, the runtime further improves to $O(n)$. Clearly, these are not a drastic improvement over, say, the $O(n \log n)$ runtime of the \oea, but one has to admit that the room for improvement is limited: The unrestricted black-box complexity of the \onemax function class is $\Omega(\frac{n}{\log n})$~\revise{\cite{ErdosR63,DrosteJW06}}, hence no black-box optimizer can optimize all functions isomorphic to \onemax in a time better than $O(\frac{n}{\log n})$. 

A runtime analysis~\cite{BuzdalovD17} of the \ollga on the random satisfiability instances regarded in~\cite{SuttonN14} showed a similar performance as on \onemax. This is caused by the structure of these random instances, which renders them similar to \onemax to the extent that also the \oea has an $O(n \log n)$ performance~\cite{DoerrNS17}. At the same time, these instances do not have the perfect fitness-distance correlation of the \onemax function, and this indeed needed to be taken into account when setting the parameters of the \ollga in~\cite{BuzdalovD17}. A runtime analysis of the \ollga on \leadingones~\cite{AntipovDK19foga} showed that for this problem, the \ollga with any $\lambda \le \frac{n}{2}$ has asymptotically the same runtime of $\Theta(n^2)$ as many other algorithms. 

Empirical studies showed that the \ollga works well (compared to classic EAs) on linear functions and RoyalRoad functions~\cite{DoerrDE15}, on the \maxsat problem~\cite{GoldmanP14}, and on the problem of hard test generation~\cite{MironovichB17}.

\subsection{Multimodal Problems}

Clearly, the usual application of evolutionary algorithms are problems with multimodal landscapes, that is, with non-trivial local optima, and these local optima often present a difficulty for the evolutionary algorithm. In the runtime analysis perspective multimodal problems have displayed very different optimization behaviors. For example, on multimodal landscapes it has been observed that crossover can recombine solutions into significantly better ones~\cite{JansenW02,StorchW04,Sudholt05}, that mutation rates significantly larger than~$\frac 1n$ can be preferable~\cite{DoerrLMN17}, and that probabilistic model-building algorithms such as estimation-of-distribution algorithms and ant-colony optimizers can significantly outperform classic algorithms~\cite{HasenohrlS18,Doerr21cgajump,DoerrK20evocop,BenbakiBD21}.

In this light, and given that all previous runtime analyses for the \ollga consider unimodal or almost unimodal problems, we feel that it is the right time to now investigate how the \ollga optimizes multimodal problems. Being the most studied multimodal benchmark in runtime analysis, we regard jump functions. These have a fitness landscape isomorphic to the one of \onemax except that there is a valley of low fitness around the optimum. Consequently, a typical hillclimber and also most evolutionary algorithms quickly run into the local optimum consisting of all points on the edge of the fitness valley, but then find it hard to cross the fitness valley. 

More precisely, the jump function class comes with a difficulty parameter~$k$, which is the width of the valley of low fitness. The fitness is essentially the fitness of \onemax except for all search points with Hamming distance between one and $k-1$ from the optimum. Consequently, the only way to leave the local optimum to a strictly better search point is to flip exactly the right $k$ bits and go to the optimum.\footnote{This particular structure of the jump benchmark has been criticized and several variants have been proposed~\cite{Jansen15,RajabiW21gecco,BamburyBD21}. \revise{With the overwhelming majority of the runtime analyses on multimodal problems still regarding the classic jump benchmark}, for the sake of comparability we prefer to regard this benchmark as well.}  For this reason, it comes as no surprise that many mutation-based evolutionary algorithms need $\Omega(n^k)$ time to optimize such a jump function~\cite{DrosteJW02,Doerr20gecco}. Using \revise{a higher or a heavy-tailed random} mutation rate~\cite{DoerrLMN17} or stagnation detection mechanisms~\cite{RajabiW20,RajabiW21evocop,RajabiW21gecco}  \revise{the runtime can be reduced, but not below} $\Omega((\frac{n}{k})^k)$. Crossover can be helpful, but the maybe most convincing work~\cite{DangFKKLOSS18} in this direction also only obtains a runtime of $O(n^{k-1} \log n)$ with the standard mutation rate and $O(n^{k-1})$ with a higher mutation rate. With additional techniques, runtimes up to $O(n)$ were obtained~\cite{DangFKKLOSS16,FriedrichKKNNS16,WhitleyVHM18,RoweA19}, but the lower the runtimes become, the more these algorithms appear custom-tailored to jump functions \revise{(see, e.g.,~\cite{Witt21})}. The extreme end is marked by an $O(\frac{n}{\log n})$ time algorithm~\cite{BuzdalovDK16} designed to witness the black-box complexity of jump functions. 

\subsection{Our Results}

Our main result is a runtime analysis of the \ollga on jump functions for all jump sizes $k \in [2..\frac n{4} ]$. Since we could not be sure that the parameter suggestions from previous works are still valid for our problem, we consider arbitrary values for the mutation rate $p$, the crossover bias $c$, and the offspring population size $\lambda$. This turned out to be the right decision as we observed much better runtimes with novel parameter values\footnote{\revise{In~\cite{FajardoS20} it was shown that the \ollga with the standard parameter setting is not effective on \jump even when parameter control mechanisms are applied to $\lambda$. This is another reason to step back from the standard parameters and consider a wider parameters space.}}. We also allowed different offspring population sizes $\lambda_M$ and $\lambda_C$ for the mutation and crossover phase, which however did not lead to stronger runtime guarantees. 

For all $k \in [2..\frac{n}{4}]$ and for arbitrary values of these four parameters (except for the only constraint $p \ge \frac{2k}{n}$), 
we prove that the \ollga when started in the local optimum of $\jump_k$ crosses the fitness valley in expected time (number of fitness evaluations) at most
\[
E\left[T\right] \le \frac{4 (\lambda_M + \lambda_C)}{q_\ell \min\{1, \lambda_M (\frac{p}{2})^k\} \min\{1, \lambda_C c^k \left(1 - c\right)^{2p n - k}\}},
\]
where $q_\ell$ is a constant in $[0.1, 1]$. When ignoring the hidden constants in the $e^{O(k)}$ factor, this bound is optimized for $p = c = \sqrt{\frac kn}$ and $\lambda_M = \lambda_C = n^{k/2} k^{-k/2}$ and then gives a runtime of 
\[
E[T] = n^{k/2} e^{O(k)} k^{-k/2}.
\]
This time bound is asymptotically optimal, that is, no other parameter values can obtain a faster expected runtime (apart from the unspecified $e^{O(k)}$ factor).

When not starting in the local optimum, but with an arbitrary initial solution or the usual random initialization, the \ollga reaches the local optimum in an expected time of $n e^{O(k)}$ iterations, if $p = c = \sqrt{\frac kn}$ and $\lambda_M$ and $\lambda_C$ are at least $\frac{n}{k}$. Therefore, large population sizes are not beneficial in this first easy part of the optimization. With slightly smaller values for the population sizes as above, namely $\lambda_M = \lambda_C = n^{(k - 1)/2} k^{-k/2}$, the expected runtime is
\[
E[T] \le n^{(k + 1)/2} e^{O(k)} k^{-k/2}.
\] 
Similar as in the previous results on \onemax, a speed-up over classic algorithms is observed for larger ranges of parameters, though these are harder to describe in a compact fashion (see Corollary~\ref{thm:parameter-range} for the details). 

The result above shows that the power of the \ollga becomes much more visible for jump functions than for the problems regarded in previous works. Concerning the optimal parameter values, we observe that they differ significantly from those that were optimal in the previous works. In particular, the relation of mutation rate and crossover bias is different. Whereas in previous works $p c n = 1$ was a good choice, we now have $p c n = k$. A moment's thought, however, shows that this is quite natural, or, being more cautious, at least fits to the previous results. We recall that $p c n$ is the expected Hamming distance of the parent from an individual generated from one isolated application of mutation and crossover. The previous works suggested that this number should be one, since one is also the expected distance of an offspring generated the classic way, that is, via standard bit mutation with mutation rate $\frac 1n$. 

Now for the optimization of jump functions, where a non-trivial local optimum has to be left, it makes sense to put more weight on larger moves in the search space. More specifically, the work~\cite{DoerrLMN17} has shown that the optimal mutation rate for the \oea optimizing jump functions is~$\frac kn$. Hence for the classic \oea, the best way of generating offspring is such that they have an expected Hamming distance of $k$ from the parent. Clearly, this remains an intuitive argument, but it shows that also when optimizing multimodal problems, the intuitive approach of previous works, which might help an algorithm designer, gave the right intuition. 


Our recommendation when using the \ollga for multimodal optimization problems would therefore be to choose $p$ and $c$ larger than in previous works, and more specifically, in a way that $p c n$ is equal to an estimate for the number of bits the algorithm typically should flip. Here ``typically'' does not mean that there are actually many moves of this size, but that this is the number of bits the algorithm has to flip most often. For example, when the \oea optimizes a jump function, it will maybe only once move to a search point in distance $k$, however, it will nevertheless need many offspring in distance $k$ until it finds the right move of this distance. 

From our rigorous analysis, we conclude that the \ollga is even better suited for the optimization of multimodal objective functions, and we hope that the just sketched intuitive considerations help algorithm designers to successfully apply this algorithm to their problems.

\textbf{Research conducted after ours:} In~\cite{AntipovD20ppsn}, it was shown that the non-trivial choice of the parameters of the \ollga when optimizing multimodal problems can partially be overcome by using heavy-tailed random parameter values. If we choose $\lambda$ from a power-law distribution with exponent $\beta_\lambda > 2$ and set $p = c = \sqrt{\frac{s}{n}}$, where $s$ follows another power-law distribution with exponent $\beta_s > 1$, then for all $k \ge 3$ the runtime of the \ollga on $\jump_k$ is $(n/k)^{(1 + \eps)k/2} e^{\Theta(k)}$ for any small constant $\eps > 0$, which is only by a $(n/k)^{\eps k/2} n^{-1/2}$ factor larger (and for some $k$ and $\eps$ even smaller) than the upper bound for the optimal static parameters (apart from the unspecified $e^{O(k)}$ factors).

In~\cite{AntipovBD21gecco} it was further shown that if all three parameters of the \ollga are chosen independently, then the runtime stays the same, namely $(n/k)^{(1 + \eps)k/2} e^{\Theta(k)}$. The empirical analysis in~\cite{AntipovBD21gecco} also shows that the \ollga with the heavy-tailed choice of parameters significantly outperforms the \oea on small jump sizes ($k = 3$ and $k = 5$ were considered). 



%

\section{Preliminaries and Notation}

\subsection{Notation}
\label{sec:notation}

By $\N$ we understand the set of positive integers. We write $[a..b]$ to denote an integer interval including its borders and $(a..b)$ to denote an integer interval excluding its borders. For $a, b \in \R$ the notion $[a..b]$ means $[\lceil a \rceil..\lfloor b \rfloor]$. For the real-valued intervals we write $[a, b]$ and $(a, b)$ respectively.
For any probability distribution $\mathcal{L}$ and random variable $X$, we write $X\sim\mathcal{L}$ to indicate that $X$ follows the law $\mathcal{L}$.
We denote the binomial law with parameters $n \in \N$ and $p \in [0,1]$ by $\Bin\left(n, p\right)$.

\subsection{The \ollga}
\label{sec:algorithm}


The main idea of the \ollga discussed in Section~\ref{sec:intro-ollea} is realized as follows.
The \ollga stores a bit string $x$ that is initialized with a random bit string. 
After the initialization it performs iterations which consist of a mutation phase and a crossover phase until some stopping criterion is met.

In the mutation phase the algorithm first chooses the mutation strength $\ell$ from the binomial distribution with parameters $n$ and $p$.
Then it creates $\lambda_M$ mutants $x^{(1)}, \dots, x^{(\lambda_M)}$, each of them is a copy of $x$ with exactly $\ell$ bits flipped. 
The positions of the flipped bits are chosen uniformly at random, independently for each mutant. The goal of this design of the mutation phase is to generate each of $\lambda_M$ offspring via standard bit mutation, but conditional on that all offspring have the same distance to their parent $x$.  
The mutant $x'$ with the best fitness is chosen as a winner of the mutation phase. 

In the crossover phase the algorithm creates $\lambda_C$ offspring $y^{(1)}, \dots,$ $y^{(\lambda_C)}$ by applying a biased crossover to $x$ and $x'$. 
The crossover operator for each position takes a bit value from $x$ with probability $1 - c$ and it takes a bit value from $x'$ with probability $c$ (independently for each position and each offspring).
If the best offspring $y$ is not worse than $x$ then it replaces $x$. 
The pseudocode of the \ollga optimizing a pseudo-Boolean function $f$ is shown in Algorithm~\ref{alg:pseudo}.
\begin{algorithm}[h]
    $x \gets $ random bit string of length $n$\;
    \While{not terminated}
        {
        \textbf{Mutation phase:}\\
        Choose $\ell \sim \Bin\left(n, p\right)$\;
        \For{$i \in [1..\lambda_M]$}
            {$x^{(i)} \gets$ a copy of $x$\;
            Flip $\ell$ bits in $x^{(i)}$ chosen uniformly at random\;
            }
        $x' \gets \argmax_{z \in \{x^{(1)}, \dots, x^{(\lambda)}\}}f(z)$\;
        \textbf{Crossover phase:}\\
        \For{$i \in [1..\lambda_C]$}
            {$y^{(i)} \gets$ a copy of $x$\;
            Flip each bit in $y^{(i)}$ that is different in $x'$ with probability $c$\;
            }
        $y \gets \argmax_{z \in \{y^{(1)}, \dots, y^{(\lambda)}\} }f(z)$\;
        \If{$f(y) \ge f(x)$}
            {
             $x \gets y$\;   
            }
        }
    \caption{The {${(1 + (\lambda , \lambda))}$~GA\xspace} maximizing function $f: \{0,1\}^n \to \R$.}
    \label{alg:pseudo}
\end{algorithm}

We intentionally do not specify a stopping criterion, which is a common practice in theoretical studies. The goal of our analysis is to determine the expected runtime of the \ollga until it finds an optimal solution. By the runtime we understand the number of iterations or fitness evaluations which the algorithm performs. Since each iteration of the algorithm uses exactly $\lambda_M + \lambda_C$ fitness evaluations, the transition between these two measures of runtime is trivial.



\subsection{Jump Functions}
\label{sec:jump}

The class of jump functions is defined through the classic \onemax function, which is defined on the space of bit strings of length $n$ and returns the number of one-bits in its argument. In formal words, 
\[
  \onemax(x) = \om(x) = \sum_{i = 1}^n x_i.  
\]
This function despite its simplicity has given a birth to many fundamental results, e.g.~\cite{Droste02,Droste04,Droste05,JansenJW05,DoerrHK12,Witt13,RoweS14,BadkobehLS14}. In particular, the analysis of the black-box complexity of \onemax~\revise{led} to the development of the \ollga~\cite{DoerrDE15}. 

The $\jump_k$ function with parameter $k \in [2..n]$ is then defined as follows.

\begin{align*}
    \jump_k(x) = 
    \begin{cases}
        \om(x) + k, \text{ if } \om(x) \in [0..n - k] \cup \{n\}, \\
        n - \om(x), \text{ if } \om(x) \in [n - k + 1..n - 1].    
    \end{cases}
\end{align*}

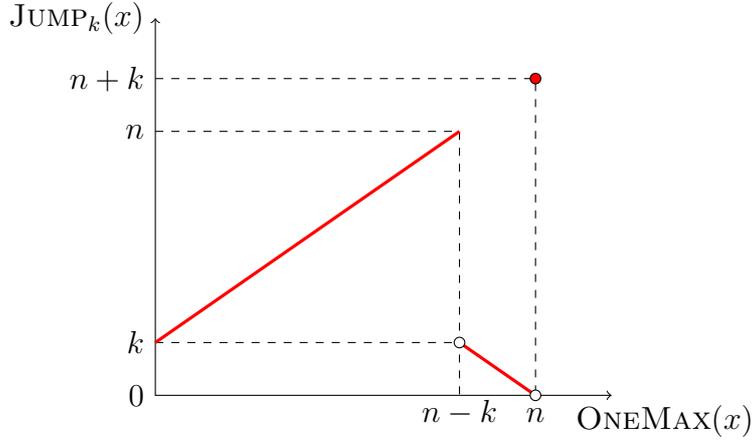
\begin{figure}
    \begin{center}
     \begin{tikzpicture}
  
      \draw [dashed] (0, 4.2) -- (5, 4.2) -- (5, 0);
      \draw [dashed] (0, 3.5) -- (4, 3.5) -- (4, 0);
      \draw [dashed] (0, 0.7) -- (4, 0.7);
  
      \draw [{<[scale=1.5]}-{>[scale=1.5]}] (0, 5) -- (0, 0) -- (6, 0);
      \draw [very thick, red] (0, 0.7) -- (4, 3.5);
      \draw [very thick, red] (4, 0.7) -- (5, 0);

      \draw [fill=white] (4, 0.7) circle (0.7mm);
      \draw [fill=white] (5, 0) circle (0.7mm);
      \draw [fill=red, draw=black] (5, 4.2) circle (0.7mm);
  
      \node [above] at (6.7, -0.7) {$\onemax(x)$};
      \node [above] at (5, -0.5) {$n$};
      \node [left] at (0, 0) {$0$};
      \node [above] at (4, -0.535) {$n - k$};
      \node [left] at (0, 0.7) {$k$};
      \node [left] at (0, 3.5) {$n$};
      \node [left] at (0, 4.2) {$n + k$};
      \node [left] at (0, 5) {$\jump_k(x)$};
  
     \end{tikzpicture}
    \end{center}
    \caption{Plot of the $\jump_k$ function. As a function of unitation, the function value of a search point $x$ depends only on the number of one-bits in~$x$.}
    \label{fig:jump}
  \end{figure}

A plot of $\jump_k$ is shown in Figure~\ref{fig:jump}. 


\subsection{Useful Tools}
In this section we provide some useful tools which we use in our proofs. We start with the following inequality which we use for multiple times in our proofs.

\begin{lemma}\label{lem:bernoulli}
	Assume $x \in \left[0, 1\right]$ and $\lambda > 0$. Then
	\begin{align*}
		1 - (1 - x)^\lambda \ge \frac{1}{2}\min\left\{1, \lambda x\right\}.
	\end{align*}
\end{lemma}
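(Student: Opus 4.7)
The plan is to split into two cases according to whether $\lambda x$ exceeds $1$, and use the standard exponential bound $(1-x)^\lambda \le e^{-\lambda x}$ (which follows from $1-x \le e^{-x}$ for $x \in [0,1]$, raised to the positive power $\lambda$) in both cases. This reduces the lemma to showing the two clean one-variable inequalities $1 - e^{-t} \ge 1/2$ for $t \ge 1$ and $1 - e^{-t} \ge t/2$ for $t \in [0,1]$, with $t := \lambda x$.

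First I would handle the case $\lambda x \ge 1$. Here $(1-x)^\lambda \le e^{-\lambda x} \le e^{-1}$, so
\[
1 - (1-x)^\lambda \ge 1 - e^{-1} > \tfrac{1}{2} = \tfrac{1}{2}\min\{1,\lambda x\},
\]
which settles this case immediately.

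Next I would address the case $\lambda x < 1$, which is the only place where a small calculation is really needed. The function $g(t) = 1 - e^{-t}$ is concave on $\R$ (its second derivative is $-e^{-t} < 0$), so on the interval $[0,1]$ it lies above the chord from $(0,0)$ to $(1,1-e^{-1})$. This gives $g(t) \ge (1-e^{-1})\,t \ge t/2$ for every $t \in [0,1]$, since $1 - e^{-1} > 1/2$. Applying this with $t = \lambda x \in [0,1)$ yields
\[
1 - (1-x)^\lambda \ge 1 - e^{-\lambda x} \ge \tfrac{\lambda x}{2} = \tfrac{1}{2}\min\{1,\lambda x\},
\]
completing the proof.

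I do not expect a genuine obstacle here; the only minor point is to justify the slope-$1/2$ chord bound, which is why I prefer the concavity argument (clean and self-contained) over expanding $e^{-t}$ as a power series or differentiating $g(t) - t/2$. An alternative route would be via Bernoulli-type inequalities directly on $(1-x)^\lambda$, but passing through $e^{-\lambda x}$ is more uniform since $\lambda$ need not be an integer.
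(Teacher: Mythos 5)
Your proof is correct, but it follows a different route than the paper. The paper invokes the inequality $(1-x)^\lambda \le \frac{1}{1+\lambda x}$ (Lemma~8 of~\cite{RoweS14}) and then finishes in one line without any case distinction: $1 - (1-x)^\lambda \ge \frac{\lambda x}{1+\lambda x} \ge \frac{\lambda x}{2\max\{1,\lambda x\}} = \frac{1}{2}\min\{1,\lambda x\}$, using $1+\lambda x \le 2\max\{1,\lambda x\}$. You instead pass through the weaker but more familiar bound $(1-x)^\lambda \le e^{-\lambda x}$ and split on whether $\lambda x \ge 1$; the case $\lambda x \ge 1$ is immediate from $1-e^{-1} > 1/2$, and the case $\lambda x < 1$ follows from concavity of $t \mapsto 1-e^{-t}$ (chord bound with slope $1-e^{-1} > 1/2$). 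Both arguments are sound; yours is fully self-contained and uses only standard facts, at the cost of a case analysis, while the paper's is shorter but leans on a cited auxiliary inequality. As a side remark, your route in fact yields the slightly stronger constant $1-e^{-1}$ in place of $\frac12$, though the lemma as stated does not need it.
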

\begin{proof}
	By~\cite[Lemma 8]{RoweS14} we have $(1 - x)^\lambda \le \frac{1}{1 + \lambda x}$. Hence,
	\begin{align*}
        1 - \left(1 - x\right)^\lambda &\ge 1 - \frac{1}{1 + \lambda x} = \frac{\lambda x}{1 + \lambda x} \\
                                       &\ge \frac{\lambda x}{2\max \{1, \lambda x\}} = \frac{1}{2}\min\{\lambda x, 1\}. \qedhere
	\end{align*}
\end{proof}

We also make a use of Chernoff bounds (see Theorem 1.10.1 and 10.10.5 in~\cite{Doerr20bookchapter}) to show the concentration of some random variables involved in our analysis. We use the following lemma, which is a particular case of these bounds for the random variables following a binomial distribution.

\begin{lemma}[Chernoff Bounds]\label{lem:chernoff}
    Let $X$ be a random variable following a binomial distribution $\Bin(n, p)$. 
    Then for all $\delta \in (0, 1)$ the probability that $X \ge (1 + \delta)np$ is at most $e^{-\frac{\delta^2 np}{3}}$ and the probability that $X \le (1 - \delta)np$ is at most $e^{-\frac{\delta^2 np}{2}}$. Also for all $\delta \ge 1$ the probability that $X \ge (1 + \delta)np$ is at most $e^{-\frac{\delta np}{3}}$.
\end{lemma}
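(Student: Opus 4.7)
The plan is to prove all three inequalities via the standard moment-generating-function (MGF) technique combined with Markov's inequality. Since $X \sim \Bin(n,p)$ decomposes as $X = \sum_{i=1}^n X_i$ with i.i.d.\ Bernoulli$(p)$ summands, its MGF factorises, and using $1 + y \le e^y$ once yields
\[
E[e^{tX}] = (1 - p + p e^t)^n \le e^{n p (e^t - 1)}
\]
for every real $t$. This single estimate, together with the Markov-type bound $P[X \ge a] \le E[e^{tX}] e^{-ta}$ for $t > 0$ (and its analogue for $t < 0$), will drive all three tail bounds; the work is then just a matter of choosing $t$ well and simplifying.

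For the upper tail I would take $t = \ln(1+\delta) > 0$, which is the canonical optimiser and gives the classical Chernoff form
\[
P[X \ge (1+\delta) np] \le \left(\frac{e^{\delta}}{(1+\delta)^{1+\delta}}\right)^{np}.
\]
To reach the stated shape $e^{-\delta^2 np / 3}$ for $\delta \in (0,1)$, the key analytic step is the one-variable inequality $\delta - (1+\delta) \ln(1+\delta) \le -\delta^2/3$ on $(0,1)$. For $\delta \ge 1$ the same derivation, now combined with the coarser $\delta - (1+\delta) \ln(1+\delta) \le -\delta/3$ on $[1,\infty)$, produces the linear-in-$\delta$ bound $e^{-\delta np / 3}$. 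For the lower tail, choosing $t = \ln(1-\delta) < 0$ analogously leads to
\[
P[X \le (1-\delta) np] \le \left(\frac{e^{-\delta}}{(1-\delta)^{1-\delta}}\right)^{np},
\]
from which the inequality $-\delta - (1-\delta) \ln(1-\delta) \le -\delta^2/2$ on $(0,1)$ gives the claimed $e^{-\delta^2 np / 2}$.

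The only non-mechanical ingredient is verifying those three real-analytic inequalities, each of which reduces to checking that the relevant difference vanishes at $\delta = 0$ (respectively $\delta = 1$) and then determining the sign of its derivative on the appropriate interval; this is textbook calculus. Since the lemma is explicitly stated as a specialisation of the general Chernoff bounds in~\cite[Theorems 1.10.1 and 10.10.5]{Doerr20bookchapter}, in the paper itself one would simply cite those theorems rather than reproduce the MGF derivation above.
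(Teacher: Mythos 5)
Your proposal is correct, and every step would go through: the MGF bound $E[e^{tX}] \le e^{np(e^t-1)}$, the optimising choices $t = \ln(1+\delta)$ and $t = \ln(1-\delta)$, and the three elementary inequalities $\delta - (1+\delta)\ln(1+\delta) \le -\delta^2/3$ on $(0,1)$, $\delta - (1+\delta)\ln(1+\delta) \le -\delta/3$ on $[1,\infty)$, and $-\delta - (1-\delta)\ln(1-\delta) \le -\delta^2/2$ on $(0,1)$ all hold (the last follows from $\ln(1-\delta) \le -\delta - \delta^2/2$, the first two from a sign check of the derivative together with the values at $\delta=0$ and $\delta=1$ respectively). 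The only difference from the paper is that the paper does not prove this lemma at all: it is stated as a special case of Theorems 1.10.1 and 1.10.5 of the cited reference and used purely by citation, exactly as you anticipate in your closing remark. So your write-up supplies the standard derivation underlying the cited theorems, which is more self-contained but adds nothing the reference does not already provide; for the purposes of this paper the citation suffices.
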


The following lemma shows the concentration of the number of the bit flips in the mutation phase by the Chernoff bounds.

\begin{lemma}\label{lem:l-bits}
	Let $p \ge \frac{1}{n}$.
    Then the number $\ell$ of the bits flipped by the mutation operator of the \ollga is in $[p n, 2p n]$ with at least constant 
    probability $q_\ell \ge 0.1$, if $n$ is at least some sufficiently large constant.
\end{lemma}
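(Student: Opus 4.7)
The plan is to write $\Pr[\ell \in [pn, 2pn]] = \Pr[\ell \ge \lceil pn \rceil] - \Pr[\ell > \lfloor 2pn \rfloor]$ and bound each piece. The upper-tail term is easy: Lemma~\ref{lem:chernoff} with $\delta = 1$ gives $\Pr[\ell \ge 2pn] \le e^{-pn/3}$, which is at most $e^{-1/3}$ since $pn \ge 1$ and decays rapidly once $pn$ grows. The main obstacle will be lower-bounding $\Pr[\ell \ge \lceil pn \rceil]$, since this probability concerns values at or above the mean, and Chernoff bounds only control tails far from the mean.

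For this lower bound I would split into two regimes. If $pn \ge C$ for a sufficiently large constant $C$, I invoke the classical result that the median of $\Bin(n,p)$ lies in $\{\lfloor pn \rfloor, \lceil pn \rceil\}$, giving $\Pr[\ell \ge \lfloor pn \rfloor] \ge 1/2$. A Stirling-type estimate then bounds the mode probability by $\Pr[\ell = \lfloor pn \rfloor] = O(1/\sqrt{pn})$, so $\Pr[\ell \ge \lceil pn \rceil] \ge 1/2 - O(1/\sqrt{C})$, which together with the Chernoff upper-tail bound yields $\Pr[\ell \in [pn, 2pn]] \ge 0.4$ for $C$ chosen large enough. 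If instead $pn \in [1, C]$, then $p \le C/n$, so as $n \to \infty$ the law $\Bin(n, p)$ converges to the Poisson distribution with mean $pn$. A continuity-compactness argument over $\lambda \in [1, C]$ shows that $\Pr[\mathrm{Poi}(\lambda) \in [\lceil \lambda \rceil, \lfloor 2\lambda \rfloor]]$ is bounded below by a positive constant; the worst case is $\lambda$ just above $1$, where the interval collapses to $\{2\}$ and the limiting value is $e^{-1}/2 \approx 0.18$, safely above $0.1$. For $n$ large enough, the same (slightly weakened) bound transfers to $\Bin(n, p)$.

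Combining the two regimes gives the claim with $q_\ell \ge 0.1$ for all $p \ge 1/n$ and $n$ above some constant threshold. The genuine technical point is the lower bound $\Pr[\ell \ge \lceil pn \rceil] \ge \mathrm{const}$ near the mean, which is not amenable to Chernoff and really requires two different tools (median-of-binomial plus an anti-concentration estimate for the large-$pn$ regime, Poisson approximation for the bounded-$pn$ regime). The remaining details (choice of the cutoff $C$, uniformity of the Poisson-approximation error, and matching the Chernoff upper-tail constant) are routine arithmetic and are made easy by the fact that the target constant $0.1$ is deliberately loose.
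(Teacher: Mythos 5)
Your two-regime strategy is sound in outline and genuinely different from the paper's, but one step fails: the anti-concentration estimate $\Pr[\ell = \lfloor pn\rfloor] = O(1/\sqrt{pn})$ is not true in general. The mode probability of $\Bin(n,p)$ is of order $1/\sqrt{np(1-p)}$, and the lemma imposes no upper bound on $p$. For example, with $p = 1 - \frac{c}{n}$ and $c\in(0,1)$ constant, one has $pn = n-c \ge C$ yet $\Pr[\ell = \lfloor pn\rfloor] = \Pr[\ell = n-1] = np^{n-1}(1-p) \to c e^{-c}$, which is a constant (up to $e^{-1}\approx 0.368$), not $O(1/\sqrt{C})$. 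So your claimed bound $\Pr[\ell \ge \lceil pn\rceil] \ge 1/2 - O(1/\sqrt{C})$, and hence the final $0.4$, is not established when $n(1-p)=O(1)$. The conclusion is recoverable — in that regime $1/2 - e^{-1} > 0.1$ still, and the upper tail vanishes since $p\ge 1/2$ implies $2pn\ge n$ — but the cleanest repair is to split at $p=1/2$: for $p<1/2$ you have $np(1-p)\ge pn/2 \ge C/2$ and your median-plus-mode argument is correct as written; for $p\ge 1/2$ only $\Pr[\ell\ge\lceil pn\rceil]$ matters, and this is exactly the quantity handled by the result the paper cites, $\Pr[\Bin(n,p)\ge np]\ge \tfrac14$ \cite{Doerr18exceedexp}. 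That single citation in fact replaces your whole median-plus-anti-concentration step at the price of the weaker constant $\tfrac14$, which still beats the Chernoff upper tail $e^{-pn/3}\le e^{-3}$ once $pn$ exceeds a moderate constant — this is precisely the paper's large-$pn$ argument.

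For bounded $pn$ the paper does not use Poisson approximation explicitly: it lower-bounds $\Pr[\ell\in[pn,2pn]]$ by the single point mass $\Pr[\ell=\lceil pn\rceil]$ (valid because $pn\ge 1$ gives $\lceil pn\rceil\le 2pn$) and reduces, via elementary estimates, to minimizing $x^{\lceil x\rceil}e^{-x}/\lceil x\rceil!$ over $x\in[1,9]$ — which is of course the Poisson limit in disguise, so your argument is morally the same one. Your identification of the worst case ($\lambda\to 1^{+}$, where the interval degenerates to $\{2\}$ and the value tends to $e^{-1}/2\approx 0.18$) is correct; two small cautions are that the ``continuity-compactness'' argument must be applied piecewise, since $\lambda\mapsto\Pr[\mathrm{Poi}(\lambda)\in[\lceil\lambda\rceil,\lfloor 2\lambda\rfloor]]$ jumps at integer and half-integer values of $\lambda$, and that the uniform transfer to $\Bin(n,p)$ should be made explicit via Le Cam's bound $d_{\mathrm{TV}}\le np^{2}\le C^{2}/n$. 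With the $p\ge 1/2$ case repaired as above, the proof goes through.
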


\begin{proof}
    Recall that the number $\ell$ of the flipped bits is chosen according to the binomial distribution $\Bin(n, p)$. 
    We first consider the case when $p$ is small.
    Assume $p n \in \left[1, 9\right]$. Then 
	\begin{align*}
        \Pr\left[\ell \in [p n, 2p n]\right] &\ge \Pr\left[\ell = \lceil p n\rceil\right] = {n \choose \lceil p n\rceil} p^{\lceil p n\rceil} \left(1 - p\right)^{n - \lceil p n\rceil}\\
        &= \frac{n!}{(n - \lceil pn \rceil)! \lceil pn \rceil!} \cdot \frac{(pn)^{\lceil pn \rceil}}{n^{\lceil pn \rceil}} \cdot (1 - p)^{n - \lceil p n\rceil}. \\
    \end{align*}
    Note that 
    \begin{align*}
      \frac{n!}{(n - \lceil pn \rceil)!n^{\lceil pn \rceil}} &\ge \frac{(n - \lceil p n \rceil  + 1)^{\lceil p n \rceil}}{n^{\lceil p n \rceil}} \ge (1 - p)^9 = (1 - o(1)), \\
      (1 - p)^{n - \lceil p n\rceil} &\ge (1 - p)^{n - np} = (1 - p)^{(\frac{1}{p} - 1) np} \ge e^{-np}. \\
    \end{align*}
    To estimate $\frac{(p n)^{\lceil p n \rceil}e^{-np}}{\lceil p n \rceil!}$ we consider function $f(x) = \frac{x^{\lceil x \rceil}e^{-x}}{\lceil x \rceil!}$ in the interval $[1, 9]$ (since we now consider only these values of $pn$). First we note that $f(1) = e^{-1} \ge 0.3$ and further we find its infimum in $(1, 9]$. Note that $f(x)$ is increasing in each interval $(i, i + 1]$, hence we can bound it from below by $f(x) \ge g(x) = \frac{(\lceil x \rceil - 1)^{\lceil x \rceil}e^{-(\lceil x \rceil - 1)}}{\lceil x \rceil!}$. Note also that $g(x)$ is a non-increasing function in $(1, 9]$, thus $f(x) \ge g(9) \ge 0.12$ for all $x \in (1, 9]$. Consequently, for $n$ which is large enough we have
    \begin{align*}
      \Pr\left[\ell \in [p n, 2p n]\right] &\ge (1 - o(1)) \cdot 0.12 \ge 0.1. 
  \end{align*}

    Now we consider the case when $p n \ge 9$.
	Since $\ell$ follows the binomial distribution with parameters $n$ and $p$, we have $E[\ell] = p n$. By the Chernoff bounds we have
	\begin{align*}
		\Pr\left[\ell \ge 2 E\left[\ell\right]\right] \le \exp\left(-\frac{E\left[\ell\right]}{3}\right) = \exp\left(-\frac{p n}{3}\right).
	\end{align*}
	By Theorem 10 in~\cite{Doerr18exceedexp} we have the following bound on the probability that the binomial distribution exceeds its expectation.
	\begin{align*}
		\Pr[\ell \ge E[\ell] = p n] \ge \frac{1}{4}.
    \end{align*}
    Hence,
    \begin{align*}
		\Pr[\ell < p n] \le \frac{3}{4}.
    \end{align*}
	Therefore, by the union bound the probability $q_\ell$ that $\ell \in [p n, 2p n]$ is at least 
	\begin{align*}	
		q_\ell &= \Pr\left[\ell \ge p n \cap \ell \le 2p n\right] \\ 
		&\ge 1 - \Pr\left[\ell < p n\right] - \Pr\left[\ell > 2p n\right] \\
		&\ge 1 - \frac{3}{4} - \exp\left(-\frac{p n}{3}\right).
	\end{align*}
	Since we assume that $p n \ge 9$, we obtain
	\begin{align*}
		\exp\left(-\frac{p n}{3}\right) \le \exp\left(-3\right) \le 0.05
	\end{align*}
	and hence,
	\begin{align*}
		q_\ell \ge 1 - \frac{3}{4} - \exp\left(-\frac{p n}{3}\right) \ge 0.2.
	\end{align*}
    Therefore
    \begin{align*}
        q_\ell \ge 0.1 = \Omega\left(1\right).
    \end{align*}
\end{proof}

We also state a similar lemma for the larger mutation rates (which are of a greater interest when we aim at escaping the local optimum).

\begin{lemma}\label{lem:l-bits-big-gamma}
	Assume $p = \omega(\frac{1}{n})$.
    Then the number $\ell$ of the bits flipped by the mutation operator is in $[p n, \frac{5}{4}p n]$ with probability $q_\ell' = \frac{1}{4} - o(1)$.
\end{lemma}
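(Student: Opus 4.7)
The plan is to mimic the structure of the proof of Lemma~\ref{lem:l-bits}, but with the tighter upper-tail estimate enabled by the stronger hypothesis $p=\omega(1/n)$. The two ingredients are the same: a lower bound on $\Pr[\ell\geq pn]$ coming from Theorem~10 of~\cite{Doerr18exceedexp}, and an upper bound on $\Pr[\ell>\tfrac54 pn]$ coming from the multiplicative Chernoff bound of Lemma~\ref{lem:chernoff}.

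First, since $\ell\sim\Bin(n,p)$ satisfies $E[\ell]=pn$, the cited theorem yields $\Pr[\ell\geq pn]\geq 1/4$, just as it did in the previous lemma. Second, I would apply the upper-tail Chernoff bound of Lemma~\ref{lem:chernoff} with $\delta=1/4\in(0,1)$ to obtain
\[
\Pr\bigl[\ell\geq \tfrac{5}{4}pn\bigr]\leq \exp\!\left(-\frac{(1/4)^2\,pn}{3}\right)=\exp\!\left(-\frac{pn}{48}\right).
\]
The hypothesis $p=\omega(1/n)$ is what makes this step qualitatively stronger than in Lemma~\ref{lem:l-bits}: it gives $pn\to\infty$, hence $\exp(-pn/48)=o(1)$.

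Combining the two estimates via the union bound,
\[
q'_\ell=\Pr\bigl[pn\leq \ell\leq \tfrac{5}{4}pn\bigr]\geq \Pr[\ell\geq pn]-\Pr\bigl[\ell>\tfrac{5}{4}pn\bigr]\geq \frac{1}{4}-\exp\!\left(-\frac{pn}{48}\right)=\frac{1}{4}-o(1),
\]
which is exactly the claimed bound.

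I do not foresee a genuine obstacle here; the only small subtlety is that the Chernoff inequality in Lemma~\ref{lem:chernoff} is stated for a strict deviation $(1+\delta)np$, while we want the event $\ell\geq \tfrac54 pn$. This is harmless because including the boundary only loses the single atom $\Pr[\ell=\lceil \tfrac54 pn\rceil]$, and in any case the statement can be applied to any $\delta'>1/4$ arbitrarily close to $1/4$, still yielding an $o(1)$ upper tail under $pn=\omega(1)$.
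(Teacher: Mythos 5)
Your proposal is correct and follows essentially the same route as the paper's proof: both combine the lower bound $\Pr[\ell \ge pn] \ge 1/4$ from Theorem~10 of~\cite{Doerr18exceedexp} with the Chernoff upper-tail bound at $\delta = 1/4$, yielding $\exp(-pn/48) = o(1)$ under $pn = \omega(1)$, and then apply a union bound. The paper writes the combination as $1 - \Pr[\ell < pn] - \Pr[\ell > \tfrac54 pn]$, which is the same estimate in a slightly different form.
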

\begin{proof}
    By the Chernoff bounds and by Theorem 10 in~\cite{Doerr18exceedexp} we have
    \begin{align*}
        q_\ell' &\ge 1 - \Pr\left[\ell < p n\right] - \Pr\left[\ell > \tfrac{5}{4}p n\right] \\
        &\ge 1 - \tfrac{3}{4} - e^{-\frac{p n}{48}} = 1 - \tfrac{3}{4} - e^{-\omega(1)} = \tfrac{1}{4} - o(1). \qedhere
    \end{align*}
\end{proof}

We also encounter random variables with hypergeometric distribution. A particular example of such random variable is the number $\ell_0$ of zero-bits which are flipped by the mutation operator after the total number $\ell$ of the bits to flip is already chosen. This random variable follows a hypergeometric distribution with parameters $n$, $n - \om(x)$ and $\ell$. For this random variable the Chernoff bounds are also applicable~\cite[Theorem 1.10.25]{Doerr20bookchapter}. We use the following special case of this bound.

\begin{lemma}
    \label{lem:hypergeometric}
    Let $x$ be a bit string with exactly $d$ zero-bits. Let $\ell_0$ be the number of zero-bits of $x$ which are flipped by the mutation operator of the \ollga after $\ell$ is chosen. Then the probability that $\ell_0 > (1 + \delta)\frac{\ell d}{n}$ is at most $\exp(-\frac{\delta^2 \ell d}{3n})$. 
\end{lemma}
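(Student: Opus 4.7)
The plan is to recognize $\ell_0$ as a hypergeometric random variable and then invoke the standard Chernoff bound for hypergeometric distributions cited in the lemma statement.

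First I would describe the distribution of $\ell_0$ precisely. Conditional on the value $\ell$ chosen in the mutation phase, the $\ell$ flipped positions form a uniformly random $\ell$-subset of the $n$ bit positions. Among these $n$ positions, exactly $d$ correspond to zero-bits of $x$. Hence $\ell_0$ is the number of ``marked'' elements in a sample of size $\ell$ drawn without replacement from a population of $n$ elements containing $d$ marked ones. This is by definition the hypergeometric distribution with parameters $n$, $d$, and $\ell$, and its expectation is
\[
E[\ell_0] = \frac{\ell d}{n}.
\]

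Next I would invoke Theorem~1.10.25 of~\cite{Doerr20bookchapter}, which states that a hypergeometric random variable $X$ satisfies the same multiplicative upper-tail Chernoff bound as a binomial random variable with the same mean: for every $\delta \in (0,1]$,
\[
\Pr\bigl[X \ge (1+\delta) E[X]\bigr] \le \exp\!\left(-\frac{\delta^2 E[X]}{3}\right).
\]
Applying this with $X = \ell_0$ and $E[\ell_0] = \ell d/n$ immediately yields
\[
\Pr\!\left[\ell_0 > (1+\delta)\frac{\ell d}{n}\right] \le \exp\!\left(-\frac{\delta^2 \ell d}{3 n}\right),
\]
which is the claim. (For $\delta > 1$ the same inequality can be weakened from the $\exp(-\delta \ell d/(3n))$ form or deduced by monotonicity, but it is not needed for the stated range of the lemma.)

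There is essentially no obstacle here: the only substantive step is identifying the correct reference bound and matching the parameterization. The one thing to be careful about is that the mutation operator first samples $\ell$ and then picks a uniformly random $\ell$-subset of positions to flip; since the lemma conditions on $\ell$ already being chosen, no additional averaging over $\ell$ is required, and the hypergeometric structure is exact. Thus the proof reduces to a one-line application of the cited Chernoff-type inequality.
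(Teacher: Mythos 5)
Your proposal is correct and matches the paper's treatment: the paper states this lemma as an immediate special case of the Chernoff bound for hypergeometric distributions (Theorem~1.10.25 in~\cite{Doerr20bookchapter}), exactly as you do, after noting that $\ell_0$ is hypergeometric with mean $\ell d/n$. No further argument is given or needed.
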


\section{Upper Bounds}

In this section we analyse the \ollga with general parameters on $\jump_k$ and show upper bounds on its runtime. We recede from the standard parameter setting of the \ollga, since the intuition behind these parameters values (that is, the intent to have only a single bit flipped if we consequently apply mutation and crossover operators) suggests that they are not efficient to escape local optima.

We observe that the working principles of the \ollga, which were shown to be efficient on \onemax, are also successful on \jump. That is, we show that in the mutation phase the algorithm detects a beneficial mutation (which flips all missing zero-bits of the current individual to ones) by inspecting the fitness of the offspring and then crossover is capable to repair all wrong bit flips made in the mutation phase. We also show that the optimal mutation rate for performing a jump is $\sqrt{\frac{k}{n}}$, which is very different from the optimal one for optimizing \onemax ($\sqrt{\frac{1}{nd}}$ when we are in distance $d$ from the optimum). At the same time the optimal crossover bias, which is also $\sqrt{\frac{k}{n}}$ is very similar to $\sqrt{\frac{d}{n}}$, which is optimal for \onemax.

We split our analysis into two parts. First we find the expected time the \ollga needs to perform a jump to the global optimum when it is already in the local optimum. Then we complete the story by considering the runtime until the \ollga gets to the local optimum starting in a random bit string.

We do not consider the case when $k = 1$, since $\jump_1$ coincides with \onemax, which is already well-studied in the context of the \ollga (see~\cite{DoerrD18} for the full picture). We also omit considering too large values of $k$ (namely, $k > \frac{n}{4}$) since they do not give much new insight about the \ollga, while they require more complicated arguments for our results to hold.

We also constrain ourselves to the case $p \ge \frac{2k}{n}$ so that once we get to the local optimum we have a decent probability to flip at least $2k$ bits. \revise{Since all mutation offspring have the same Hamming distance from the parent,} this implies that an individual with $k$ zero-bits flipped will have a better fitness than any other offspring and therefore selected as the winner of the mutation phase $x'$. Without this assumption an individual with all zero-bits flipped to one might occur in the fitness valley, thus it is not detected as the mutation phase winner. Hence, the jump to the global optimum becomes more challenging for the algorithm, which makes this parameter setting not really promising to be effective on multimodal functions.

\subsection{Escaping the Local Optimum}
\label{sec:jump-to-global}

In this section we analyse how the \ollga leaves the local optimum. Although by runtime we understand the time until the optimum is sampled, it is fair to consider the time until $x$ becomes the optimum for at least two reasons. (i) By disregarding the event that the optimum is sampled in the mutation phase, we still get an upper bound on the runtime. (ii) Since the probability to sample the optimum in the mutation phase is small compared to the probability to sample the optimum in the crossover phase, we expect to lose only a little. Due to the elitist selection the only chance to leave the local optimum is to find the global optimum in one iteration. For this it is sufficient that the following two consecutive events happen.
\begin{enumerate}
    \item The mutation phase winner $x'$ has all $k$ bits which are zero in the current individual $x$ flipped to one.
    \item The crossover winner $y$ takes all $k$ bits which are zero in $x$ from $x'$ and all bits which are zero in $x'$ from $x$.
\end{enumerate}
We first estimate the probability of the first event and then estimate the probability of the second event conditional on the first one.

We call the mutation phase \emph{successful} if all $k$ zero-bits of $x$ are flipped to one in $x'$ (and possibly some one-bits are flipped to zero)
and the number $\ell$ of the flipped bits is at most $2p n$. We estimate the probability $p_M$ of having a successful mutation phase in the following lemma.

\begin{lemma}
    \label{lem:pr-successful-mut}
	Let $k \le \frac{n}{4}$. If $p \ge \frac{2k}{n}$, then we have
	\begin{align*}
		p_M \ge \frac{q_\ell}{2}\min\left\{1, \lambda_M \left(\frac{p}{2}\right)^k\right\},
	\end{align*}
    where $q_\ell$ is as defined in Lemma~\ref{lem:l-bits}, which is $\Theta(1)$.
\end{lemma}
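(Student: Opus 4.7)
The plan is to condition on the event that the mutation strength $\ell$ lies in $[pn, 2pn]$, which by Lemma~\ref{lem:l-bits} happens with probability at least $q_\ell$, and then to estimate the conditional probability that the winner $x'$ of the mutation phase has all $k$ zero-bits of $x$ flipped.

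Under the conditioning, I would first compute the probability that one fixed mutation offspring has all $k$ zero-bits of $x$ flipped. Since the offspring is obtained by flipping $\ell$ uniformly random bits, this probability equals
\[
\frac{\binom{n-k}{\ell-k}}{\binom{n}{\ell}} = \prod_{i=0}^{k-1}\frac{\ell-i}{n-i}.
\]
I would then lower-bound each factor by $\frac{\ell-k+1}{n}$, and use $\ell \ge pn$ together with $pn \ge 2k$ (coming from $p \ge 2k/n$) to conclude that $\ell-k+1 \ge pn - k \ge pn/2$. This yields a lower bound of $(p/2)^k$ for each single offspring.

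The key step (and the one that needs a little care) is to verify that if at least one of the $\lambda_M$ offspring does flip all $k$ zero-bits, then such an offspring is necessarily chosen as $x'$. Here I would use that, given a fixed $\ell \in [pn, 2pn]$, all $\lambda_M$ offspring lie at Hamming distance exactly $\ell$ from $x$. An offspring that leaves $j$ of the zero-bits of $x$ unflipped satisfies $\om(\cdot) = n + k - \ell - 2j$. Since $\ell \ge 2k$, all these values are at most $n-k$, so they lie in the \onemax regime of $\jump_k$ and no offspring falls into the fitness valley. Hence the $\jump_k$-fitness is strictly decreasing in $j$, and any offspring with $j=0$ (i.e.\ all zero-bits of $x$ flipped) strictly dominates the rest, so it will be selected as the mutation winner $x'$.

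Finally, since the $\lambda_M$ offspring are independent given $\ell$, I would apply Lemma~\ref{lem:bernoulli} with $x = (p/2)^k$ and $\lambda = \lambda_M$ to bound the probability that at least one offspring is of the successful type from below by $\tfrac{1}{2}\min\{1, \lambda_M(p/2)^k\}$. Multiplying by the lower bound $q_\ell$ from Lemma~\ref{lem:l-bits} for the conditioning on $\ell$ gives the claimed bound
\[
p_M \;\ge\; \frac{q_\ell}{2}\min\!\left\{1,\ \lambda_M\!\left(\frac{p}{2}\right)^{\!k}\right\}.
\]
The main obstacle is the ``winner-is-successful'' argument in the third paragraph; the rest is essentially bookkeeping with binomial coefficients and the already-stated lemmas.
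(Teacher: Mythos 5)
Your proposal is correct and follows essentially the same route as the paper's proof: condition on $\ell \in [pn, 2pn]$ via Lemma~\ref{lem:l-bits}, bound the single-offspring probability $\binom{n-k}{\ell-k}/\binom{n}{\ell} \ge (p/2)^k$ using $pn \ge 2k$, argue that $\ell \ge 2k$ keeps all mutants out of the fitness valley so that a good offspring necessarily wins the intermediate selection, and finish with Lemma~\ref{lem:bernoulli}. Your explicit computation $\om(\cdot) = n+k-\ell-2j$ for the winner-selection step is a slightly more detailed rendering of the same argument the paper gives.
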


\begin{proof}
    If $\ell \ge 2k$ then we flip at least $k$ one-bits in each mutant, hence the fitness of each mutant is at most $n - k$. Therefore, if there is at least one individual with all $k$ zero-bits flipped,  then this individual has a greater value of $\jump_k$ than any other individual which does not have all zero-bits flipped. Hence, such an individual is chosen as the mutation winner $x'$.
    Therefore, for a successful mutation phase it suffices that the following two events occur (in this order).
	\begin{itemize}
		\item The number of flipped bits $\ell$ is in $[p n, 2p n]$.
		\item The $k$ zero-bits of $x$ are among the $\ell$ chosen bits in at least one of the $\lambda_M$ offspring. We call such offspring \emph{good} in this proof.
	\end{itemize}
	
    By Lemma~\ref{lem:l-bits} the probability of the first event is $q_\ell \ge 0.1$. We condition on this event in the remainder. The probability $q_M(\ell)$ that one particular offspring is good is $\frac{\binom{n - k}{\ell - k}}{\binom{n}{\ell}}$.
	By the assumption that $p \ge \frac{2k}{n}$ we have
	\begin{align*}	
        q_M(\ell) &= \frac{\binom{n - k}{\ell - k}}{\binom{n}{\ell}} = \frac{(n-k)!}{(\ell-k)!(n-\ell)!}\cdot\frac{\ell!(n-\ell)!}{n!} \\
                           &= \frac{\ell(\ell-1)\dots(\ell-k+1)}{n(n-1)\dots(n-k+1)} 
                           \ge \left(\frac{\ell-k}{n}\right)^k \ge \left(\frac{p n}{2n}\right)^k = \left(\frac{p}{2}\right)^k.
	\end{align*}

	 The probability that at least one offspring is good is  $1 - (1 - q_M(\ell))^{\lambda_M}$.
	By Lemma~\ref{lem:bernoulli}, we estimate
	\begin{align*}
        1 - \left(1 - q_M(\ell)\right)^{\lambda_M} &\ge \frac{1}{2}\min\left\{1, \lambda_M q_M(\ell)\right\} \\
        &\ge \frac{1}{2}\min\left\{1, \lambda_M\left(\frac{p}{2}\right)^k\right\}.
	\end{align*}
	Therefore, we conclude
	\begin{align*}
        p_M &\ge \Pr\left[\ell \in \left[p n, 2p n\right]\right] \cdot \frac{1}{2}\min\left\{1, \lambda_M\left(\frac{p}{2}\right)^k\right\} \\
            &\ge \frac{q_\ell}{2}\min\left\{1, \lambda_M\left(\frac{p}{2}\right)^k\right\}. \qedhere
	\end{align*}
\end{proof}


Now we proceed with the crossover phase.
We call the crossover phase \emph{successful} (conditional on a successful mutation phase) if the winner $y$ takes all bits which are zero in $x'$ from $x$ (where they are ones) and all $k$ bits which are zero in $x$ from $x'$ (where they are ones). 
We denote the probability of a successful crossover phase by $p_C$. 
\begin{lemma}\label{lem:pr-successful-cr}
    Assume that $k \le \frac{n}{4}$ and the mutation phase was successful.
    Then
	\begin{align*}
        p_C \ge \frac{1}{2}\min\left\{1, \lambda_C c^k \left(1 - c\right)^{2p n - k}\right\}.
	\end{align*}
\end{lemma}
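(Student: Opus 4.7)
The plan is to condition on the successful mutation phase and directly compute, for a single crossover offspring, the probability of becoming the optimum; then amplify using independence over the $\lambda_C$ offspring and an application of Lemma~\ref{lem:bernoulli}.

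First, I would unpack what a successful mutation phase gives us. Since it is successful, the mutation winner $x'$ is obtained from $x$ by flipping exactly $\ell$ bits, where $\ell \in [pn, 2pn]$, and among these $\ell$ flipped positions are all $k$ zero-bits of $x$ (now ones in $x'$), while the remaining $\ell - k$ positions are one-bits of $x$ that got flipped to zero in $x'$. Hence $x$ and $x'$ differ in exactly $\ell$ positions, split as $k$ ``good'' positions (where $x'$ carries the correct optimum-bit) and $\ell - k$ ``bad'' positions (where $x$ carries the correct optimum-bit).

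Next I would analyze one crossover offspring $y^{(i)}$. Recall that $y^{(i)}$ starts as a copy of $x$ and at every position where $x$ and $x'$ differ it takes the bit from $x'$ independently with probability $c$. For $y^{(i)}$ to coincide with the global optimum $1^n$, we need that it takes the $x'$-bit at each of the $k$ good positions (probability $c$ each) and the $x$-bit at each of the $\ell - k$ bad positions (probability $1 - c$ each). By independence across positions, the probability of this event is
\[
c^k (1 - c)^{\ell - k} \ge c^k (1 - c)^{2pn - k},
\]
where the inequality uses $\ell \le 2pn$ from the mutation-success event.

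Finally, since the $\lambda_C$ crossover offspring are generated independently, the probability that at least one of them equals the optimum is at least $1 - (1 - c^k(1-c)^{2pn-k})^{\lambda_C}$. Since the optimum is the unique maximizer of $\jump_k$, if any $y^{(i)}$ equals the optimum, then $y = \arg\max f(y^{(i)})$ is the optimum as well, and (since $f(y) > f(x)$) it is accepted. Applying Lemma~\ref{lem:bernoulli} with $x = c^k(1-c)^{2pn-k} \in [0,1]$ and $\lambda = \lambda_C$ yields
\[
p_C \ge 1 - \bigl(1 - c^k(1-c)^{2pn-k}\bigr)^{\lambda_C} \ge \frac{1}{2}\min\bigl\{1, \lambda_C c^k (1-c)^{2pn-k}\bigr\},
\]
which is the claimed bound. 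There is no real obstacle here: the argument is essentially a clean accounting of the $k$ good and at most $2pn - k$ bad positions together with an invocation of the standard Bernoulli-type estimate already proved as Lemma~\ref{lem:bernoulli}; the only point requiring a moment of care is the monotonicity step $(1-c)^{\ell-k} \ge (1-c)^{2pn-k}$, which is precisely where the upper bound $\ell \le 2pn$ guaranteed by a successful mutation phase is used.
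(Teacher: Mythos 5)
Your proof is correct and follows essentially the same route as the paper's: compute the single-offspring success probability $c^k(1-c)^{\ell-k}$, lower-bound it by $c^k(1-c)^{2pn-k}$ via $\ell \le 2pn$ from the definition of a successful mutation phase, amplify over the $\lambda_C$ independent offspring, and conclude with Lemma~\ref{lem:bernoulli}. The extra remarks on why the optimum, once sampled, is selected and accepted are fine but not needed beyond what the paper states.
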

\begin{proof}
    To generate an optimal solution in one application of the crossover operator we need to take $k$ particular bits from $x'$ and $\ell - k$ particular bits from $x$.
    The probability $q_C$ to generate such a crossover offspring is
	\begin{align*}
        q_C = c^k \left(1 - c\right)^{\ell - k} \ge c^k \left(1 - c\right)^{2p n - k},
	\end{align*}
	since a successful mutation implies that $\ell \le 2p n$. The probability to generate at least one such offspring is
	\begin{align*}
        p_C &= 1 - \left(1 - q_C\right)^{\lambda_C} \ge 1 - \left(1 - c^k \left(1 - c\right)^{2p n - k}\right)^{\lambda_C} \\
            &\ge \frac{1}{2}\min\left\{1, \lambda_C c^k \left(1 - c\right)^{2p n - k}\right\},
	\end{align*}
	where the last inequality follows from Lemma~\ref{lem:bernoulli}.
\end{proof}


With Lemmas~\ref{lem:pr-successful-mut} and~\ref{lem:pr-successful-cr} we are capable of proving the upper bounds on the expected runtime until the \ollga escapes the local optimum. We estimate the runtime both in terms of the number fitness evaluations and the number of iterations, denoted by $T_F$ and $T_I$ respectively. 

\begin{theorem}\label{thm:runtime}
    Let $k \le \frac{n}{4}$. Assume that $p \ge \frac{2k}{n}$
    and $q_\ell$ is as defined in Lemma~\ref{lem:l-bits}.
    Then the expected runtime of \ollga on $\jump_k$ is 
	\begin{align*}
        E\left[T_I\right] \le \frac{4}{q_\ell \min\left\{1, \lambda_M \left(\frac{p}{2}\right)^k\right\} \min\left\{1, \lambda_C c^k \left(1 - c\right)^{2p n - k}\right\}}
    \end{align*}
	iterations and 
	\begin{align*}
        E\left[T_F\right] \le \frac{4(\lambda_M + \lambda_C)}{q_\ell \min\left\{1, \lambda_M \left(\frac{p}{2}\right)^k\right\} \min\left\{1, \lambda_C c^k \left(1 - c\right)^{2p n - k}\right\}}
    \end{align*}
    fitness evaluations if the algorithm starts in the local optimum.
\end{theorem}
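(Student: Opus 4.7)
The plan is to combine Lemmas~\ref{lem:pr-successful-mut} and~\ref{lem:pr-successful-cr} with the elitism of the \ollga to obtain a per-iteration lower bound on the success probability, and then invoke the geometric distribution to translate this into an upper bound on the expected runtime.

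First I would observe that, since the parent $x$ starts on the local optimum (fitness $n$) and the \ollga only replaces $x$ by $y$ when $f(y) \ge f(x)$, the parent stays at fitness $n$ until the global optimum is sampled. By definition of $\jump_k$ the only search points of fitness at least $n$ are the search points with exactly $n-k$ ones (the local optima, fitness $n$) and the unique global optimum (fitness $n+k$). Hence in every iteration before success, the current $x$ has exactly $k$ zero-bits and therefore satisfies the hypotheses of Lemmas~\ref{lem:pr-successful-mut} and~\ref{lem:pr-successful-cr}. Moreover, the population-state argument is distributionally the same for every such $x$ by symmetry, so the two lemmas apply uniformly in every iteration.

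Next, I would call an iteration \emph{successful} if both its mutation phase is successful and, conditional on that, its crossover phase is successful; by the definition of ``successful'' in the two lemmas, any successful iteration produces the global optimum, which is then trivially selected and terminates the process. Using that the crossover success event is analysed conditionally on a successful mutation, the per-iteration success probability is at least $p_M \cdot p_C$, which by Lemmas~\ref{lem:pr-successful-mut} and~\ref{lem:pr-successful-cr} is at least
\[
\frac{q_\ell}{4}\,\min\!\left\{1,\lambda_M \left(\tfrac{p}{2}\right)^k\right\}\,\min\!\left\{1,\lambda_C c^k (1-c)^{2pn-k}\right\}.
\]
Since this lower bound is the same in every iteration before success, the number $T_I$ of iterations until the optimum is generated is stochastically dominated by a geometric random variable with this success parameter. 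Taking expectations (or invoking the standard waiting-time argument) yields the claimed bound on $E[T_I]$.

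Finally, for $E[T_F]$ I would observe that each iteration uses exactly $\lambda_M+\lambda_C$ fitness evaluations, so $T_F=(\lambda_M+\lambda_C)\,T_I$ as random variables, and the bound on $E[T_F]$ follows. The only mildly subtle point — the one I expect to spend a sentence justifying — is that the per-iteration success probability bound really is uniform across iterations despite the parent possibly drifting across different points of the local-optimum plateau; this is immediate from the fact that both lemmas depend on the parent only through its number of zero-bits, which remains equal to $k$ until the optimum is found.
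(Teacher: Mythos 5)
Your proposal is correct and follows essentially the same route as the paper's own proof: bound the per-iteration success probability by $p_M p_C$ via Lemmas~\ref{lem:pr-successful-mut} and~\ref{lem:pr-successful-cr}, apply the geometric waiting-time argument, and multiply by $\lambda_M+\lambda_C$ for the fitness-evaluation count. Your extra remark that the bound is uniform because the lemmas depend on the parent only through its number of zero-bits (which stays $k$ while drifting on the plateau) is a point the paper glosses over, but it is a refinement rather than a different argument.
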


\begin{proof}
    When the algorithm is in the local optimum it stays there until it moves to the optimum. During this time in each iteration it has the same probability $P$ to move into the global optimum, which is the probability that a successful mutation phase is followed by a successful crossover phase:
	\begin{align*}
        P = p_M p_C \ge \frac{q_\ell}{2}\lambda_M\left(\frac{p}{2}\right)^k \cdot \frac{1}{2}\lambda_C c^k \left(1 - c\right)^{2p n - k}.
	\end{align*}
	Hence we obtain an expected optimization time in terms of iterations of
	\begin{align*}
        E\left[T_I\right] = \frac{1}{P} \le \frac{4}{q_\ell \lambda_M\lambda_C \left(\frac{p c}{2}\right)^k \left(1 - c\right)^{2p n - k}}.
	\end{align*}
    In each iteration the \ollga performs exactly $\lambda_M + \lambda_C$ fitness evaluations, which gives us an expected number of
	\begin{align*}
        E\left[T_F\right] \le \frac{4 \left(\lambda_M + \lambda_C\right)}{q_\ell \lambda_M \lambda_C \left(\frac{p c}{2}\right)^k (1 - c)^{2p n - k}}
    \end{align*}
    fitness evaluations in total.
\end{proof}


With help of Theorem~\ref{thm:runtime} we deliver good values for the parameters, namely $p = c = \sqrt{\frac{k}{n}}$ and $\lambda_C = \lambda_M = \sqrt{\frac{n}{k}}^k$. We omit the proof that these parameters yield the lowest upper bound (apart from optimizing the $e^{O(k)}$ factor), since it is just a routine work with complicated derivatives, but we state the runtime bounds resulting from these settings in the following corollary. In order to use this result in Section~\ref{sec:total-runtime} we also formulate this theorem for general population sizes.

\begin{corollary}\label{thm:optimal-jump-runtime}
    Let $k \in [2..\lfloor\frac{n}{4}\rfloor]$.
    Assume that $p = c = \sqrt{\frac{k}{n}}$ and
    $\lambda_M = \lambda_C = \lambda \le 2^k\sqrt{\frac{n}{k}}^k$.
    Then the expected runtime of \ollga on $\jump_k$ is  $E[T_F] \le n^k k^{-k} e^{O(k)}\lambda^{-1}$
    fitness evaluations and
        $E[T_I] \le n^k k^{-k} e^{O(k)}\lambda^{-2}$
	iterations, if it starts in a local optimum of $\jump_k$. For $\lambda = \sqrt{\frac{n}{k}}^k$ these bounds are $E[T_I] \le e^{O(k)}$ and $E[T_F] \le \sqrt{\frac{n}{k}}^k e^{O(k)}$.
\end{corollary}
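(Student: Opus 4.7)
The plan is to apply Theorem~\ref{thm:runtime} directly with the specified parameter values and carefully estimate each factor in the denominator. With $p = c = \sqrt{k/n}$ and $\lambda_M = \lambda_C = \lambda$, the denominator of the iteration bound becomes
\[
q_\ell \min\{1, \lambda(p/2)^k\} \min\{1, \lambda c^k(1-c)^{2pn-k}\},
\]
so the task reduces to evaluating the two $\min$-expressions. For the first one I would compute $\lambda (p/2)^k = \lambda \cdot 2^{-k} (k/n)^{k/2}$ and use the assumption $\lambda \le 2^k (n/k)^{k/2}$ to conclude $\lambda(p/2)^k \le 1$, so the first minimum is exactly $\lambda(p/2)^k$.

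For the second factor I would estimate $(1-c)^{2pn-k}$. Since $k \le n/4$ gives $c = \sqrt{k/n} \le 1/2$ and $cpn = k$, sandwiching $\ln(1-c)$ between $-c-c^2/(1-c)$ and $-c$ and plugging in $2pn - k = 2\sqrt{kn} - k$ yields $(1-c)^{2pn-k} = e^{-\Theta(k)}$. Consequently $c^k(1-c)^{2pn-k} = (k/n)^{k/2} e^{-\Theta(k)}$, and the product $\lambda c^k(1-c)^{2pn-k}$ equals $\lambda (k/n)^{k/2} e^{-\Theta(k)}$. The $\min$ is then split into two cases depending on whether this quantity is below or above $1$, but in both cases I would verify that $\frac{1}{\min\{1,\lambda c^k (1-c)^{2pn-k}\}} \le (n/k)^{k/2} e^{O(k)}/\lambda$, by using $\lambda \le 2^k(n/k)^{k/2} = e^{O(k)}(n/k)^{k/2}$ in the regime where the minimum equals $1$.

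Combining the two estimates gives
\[
E[T_I] \le \frac{4}{q_\ell \lambda(p/2)^k \cdot \min\{1,\lambda c^k(1-c)^{2pn-k}\}} \le \frac{(n/k)^k e^{O(k)}}{\lambda^2} = n^k k^{-k} e^{O(k)} \lambda^{-2},
\]
which is the claimed iteration bound. Multiplying by $\lambda_M + \lambda_C = 2\lambda$ yields the stated $E[T_F] \le n^k k^{-k} e^{O(k)} \lambda^{-1}$ bound on fitness evaluations. The final claim follows from substituting $\lambda = (n/k)^{k/2}$: the $\lambda^{-2}$ cancels the $(n/k)^k$ factor in $E[T_I]$ to give $e^{O(k)}$, and multiplying by $2\lambda$ gives $\sqrt{n/k}^k e^{O(k)}$ fitness evaluations.

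The main obstacle is not the algebraic manipulation itself but verifying that the case where $\lambda c^k(1-c)^{2pn-k} > 1$ (which can happen for $\lambda$ close to the upper limit, since $2^k e^{-\Theta(k)}$ may exceed $1$) still yields the same asymptotic bound; this is handled by the observation that in that regime the extra factor $\lambda$ coming from the missing denominator term is absorbed into $e^{O(k)}$ thanks to the constraint $\lambda \le 2^k(n/k)^{k/2}$. A minor care is also needed to ensure the estimate $(1-c)^{2pn-k} = e^{-\Theta(k)}$ is valid throughout the allowed range $2 \le k \le n/4$, which is exactly where the hypothesis $k \le n/4$ is used.
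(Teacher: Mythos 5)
Your proposal is correct and follows essentially the same route as the paper: plug the parameters into Theorem~\ref{thm:runtime}, show the first minimum equals $\lambda(p/2)^k$, bound $(1-c)^{2pn-k}$ by $e^{-\Theta(k)}$ (the paper uses $(1-x)^{-1/x}\le 4$ for $x\le 1/2$ where you sandwich $\ln(1-c)$), and multiply by $2\lambda$ for the evaluation count. The only cosmetic difference is that the paper simply asserts $\lambda c^k(1-c)^{2pn-k}\le 1$ under the stated constraint on $\lambda$, whereas you additionally cover the case where that quantity exceeds one; your extra case is vacuous but harmless, and your handling of it is sound.
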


\begin{proof}
    With $\lambda \le 2^k\sqrt{\frac{n}{k}}^k$ we have $\lambda (\frac{p}{2})^k \le 1$ and $\lambda c^k(1 - c)^{2p n - k} \le 1$. Consequently, by Theorem~\ref{thm:runtime} we have
	\begin{align*}
        E\left[T_I\right] &\le  \frac{4}{q_\ell \lambda^2 \left(\frac{k}{2n}\right)^k \left(1 - \sqrt{\frac{k}{n}}\right)^{2 \sqrt{kn} - k}} \\
                          &= \frac{2^{k + 2}\left(\frac{n}{k}\right)^k}{q_\ell \lambda^2 \left(1 - \sqrt{\frac{k}{n}}\right)^{2 \sqrt{kn} - k}} \\
                          &\le \frac{2^{k + 2}}{q_\ell \lambda^2} \left(\frac{n}{k}\right)^k \left(1 - \sqrt{\frac{k}{n}}\right)^{-2 \sqrt{kn}}  \\
                          &\le \frac{2^{k + 2}}{q_\ell \lambda^2} \left(\frac{n}{k}\right)^k \left(1 - \sqrt{\frac{k}{n}}\right)^{-\sqrt{\frac{n}{k}} 2k},
	\end{align*}
    Where $q_\ell \in [0.1, 1]$ is a constant defined in Lemma~\ref{lem:l-bits}. By the estimate $(1 - x)^{-\frac{1}{x}} \le 4$ which holds for all $x \in (0, \frac{1}{2}]$ and by $\sqrt{\frac{k}{n}} \le \sqrt{\frac{n}{4n}} = \frac{1}{2}$ we have
    \begin{align}
        \label{eq:t-i}
        \begin{split}
        E\left[T_I\right] &\le \frac{2^{k + 2}}{q_\ell \lambda^2} \left(\frac{n}{k}\right)^k\left(1 - \sqrt{\frac{k}{n}}\right)^{-\sqrt{\frac{n}{k}} 2k} \\
                          &\le \frac{2^{k + 2}}{q_\ell \lambda^2} \left(\frac{n}{k}\right)^k 4^{2k} \\
                          &= \left(\frac{n}{k}\right)^k \frac{e^{O(k)}}{\lambda^2} .
        \end{split}
    \end{align}
    The expected number of fitness evaluations is $\lambda_M + \lambda_C = 2\lambda$ times greater, hence we have
    \begin{align}
        \label{eq:t-f}
        E\left[T_F\right] \le \left(\frac{n}{k}\right)^k \frac{e^{O(k)}}{\lambda}.
    \end{align}

    Putting $\lambda = \sqrt{\frac{n}{k}}^k$ into~\eqref{eq:t-i} and~\eqref{eq:t-f} we have $E[T_I] \le e^{O(k)}$ and $E[T_F] \le \sqrt{\frac{n}{k}}^k e^{O(k)}$.
\end{proof}

In the following corollary we show a wide range of the parameters, which yield a better upper bound than the mutation-based algorithm (apart from the $e^{O(k)}$ factor) for the sub-linear jump sizes. We do not show it for $k = \Theta(1)$, since in this case the upper bound given by Corollary~\ref{thm:optimal-jump-runtime} is $e^{O(k)}$, which is not better than the runtime of best mutation-based EAs. 

\begin{corollary}
    \label{thm:parameter-range}
    Let $k \ge 2$ and $k = o(n)$.
    Assume that $p = \omega(\frac{k}{n})$, $c = \omega(\frac{k}{n})$ and $p c = O(\frac{k}{n})$.
    Define $\alpha \coloneqq \lambda_M (\frac{p}{2})^k$ and $\beta \coloneqq \lambda_C c^k (1 - c)^{2p n - k}$.
    If $\alpha$ and $\beta$ are at most one and $\alpha = \omega((\frac{k}{nc})^k)$ and $\beta = \omega((\frac{2k}{p n})^k)$, then the expected number of fitness evaluations until the \ollga reaches the global optimum starting from the local optimum of $\jump_k$ is
    \[
      E[T_F] = o\left(\left(\frac{n}{k}\right)^k\right)e^{O(k)}.
    \] 
\end{corollary}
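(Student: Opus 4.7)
The plan is to feed the hypotheses directly into Theorem~\ref{thm:runtime} and then estimate each of the resulting factors separately. Since $\alpha \le 1$ and $\beta \le 1$, the two minima in the bound of Theorem~\ref{thm:runtime} collapse to $\alpha$ and $\beta$, giving
\[
E[T_F] \le \frac{4(\lambda_M + \lambda_C)}{q_\ell\, \alpha\, \beta}.
\]
I would then split this into the two summands $\lambda_M/(\alpha\beta)$ and $\lambda_C/(\alpha\beta)$ and handle them in parallel. Using the definitions $\alpha = \lambda_M (p/2)^k$ and $\beta = \lambda_C c^k (1-c)^{2pn-k}$ to eliminate the population sizes in favour of $p$ and $c$, these become
\[
\frac{\lambda_M}{\alpha\beta} = \frac{(2/p)^k}{\beta}, \qquad \frac{\lambda_C}{\alpha\beta} = \frac{c^{-k}(1-c)^{-(2pn - k)}}{\alpha}.
\]

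For the first summand the assumption $\beta = \omega((2k/(pn))^k)$ immediately yields $(2/p)^k/\beta = o((n/k)^k)$. For the second, the assumption $\alpha = \omega((k/(nc))^k)$ gives $c^{-k}/\alpha = o((n/k)^k)$, leaving only the correction factor $(1-c)^{-(2pn - k)}$ to be controlled.

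This last estimate is the only real obstacle, but it is routine. I first note that the combination of $p = \omega(k/n)$ with $pc = O(k/n)$ forces $c = O(k/(np)) = o(1)$, so that $-\ln(1-c) \le (1 + o(1))c$ for $n$ large enough. Consequently,
\[
(1 - c)^{-(2pn - k)} \le \exp\bigl((2pn - k)(1 + o(1)) c\bigr) \le \exp(O(pcn)) = e^{O(k)},
\]
using $pcn = O(k)$ in the last step. Combining the two contributions, I obtain
\[
E[T_F] \le \frac{4}{q_\ell}\left(\frac{\lambda_M}{\alpha\beta} + \frac{\lambda_C}{\alpha\beta}\right) = o\!\left(\left(\frac{n}{k}\right)^{\!k}\right) e^{O(k)},
\]
which is the claimed bound. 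Beyond Theorem~\ref{thm:runtime} and this one-line estimate on $(1-c)^{-(2pn-k)}$, no new probabilistic argument is required; the corollary is essentially a substitution exercise highlighting the range of parameters still within range of the upper-bound technique.
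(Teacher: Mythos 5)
Your proof is correct and follows essentially the same route as the paper's: both plug $\alpha,\beta\le 1$ into Theorem~\ref{thm:runtime}, split the resulting bound into the same two summands $(2/p)^k/\beta$ and $c^{-k}(1-c)^{-(2pn-k)}/\alpha$, and control each via the $\omega$-lower bounds on $\beta$ and $\alpha$. The only cosmetic difference is the justification of $(1-c)^{-(2pn-k)}=e^{O(k)}$ --- you expand $-\ln(1-c)\le(1+o(1))c$ using $c=o(1)$, while the paper uses $(1-x)^{1/x}\ge \frac14$ --- and both are valid given $pcn=O(k)$.
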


Before we prove the corollary we shortly discuss how one can choose the parameters that give us $o\left(\left(\frac{n}{k}\right)^k\right)e^{O(k)}$ runtime with Corollary~\ref{thm:parameter-range}. First we should choose $p$. It can be any value which is $\omega(\frac{k}{n})$ and which is $o(1)$. Then with the chosen value of $p$ we can choose any $c$ which is on the one hand $\omega(\frac{k}{n})$, but on the other hand $O(\frac{k}{n}p^{-1})$. Note that the closer $p$ is to $\Theta(1)$, the smaller the range for $c$ (thus we could not choose $p = \Theta(1)$, since in this case we cannot simultaneously satisfy $c = \omega(\frac{k}{n})$ and $p c = O(\frac{k}{n})$). 

After we determine $p$ and $c$, we can choose $\lambda_M$ and $\lambda_C$. For $\lambda_M$ the upper bound for the possible range is $(\frac{p}{2})^{-k}$, which follows from condition $\alpha = \lambda_M (\frac{p}{2})^k \le 1$. The lower bound for $\lambda_M$ is $\omega((\frac{2k}{p c n})^k)$, which follows from the condition $\alpha = \omega((\frac{k}{nc})^k)$. For $\lambda_C$ we have similarly obtained bounds, which are $c^{-k}(1 - c)^{-(2p n - k)}$ and $\omega((\frac{2k}{p c n})^k) (1 - c)^{-(2p n - k)}$.

Generally, the choice of the $\lambda_M$ and $\lambda_C$ should be made in such way that they were as close as possible to the inverse probabilities of creating a good offsprings in the mutation and crossover phases respectively. By Lemma~\ref{lem:bernoulli} this choice yields a $\Theta(1)$ probability of a successful iteration. Any smaller population size reduces this probability (usually greater than it reduces the cost of one iteration), while any greater population size only increases the cost of each iteration without significantly increasing the success probability.

\begin{proof}[Proof of Corollary~\ref{thm:parameter-range}]
    Since $\alpha$ and $\beta$ are at most one, the runtime given by Theorem~\ref{thm:runtime} is simplified to
    \begin{align*}
        E[T_F] &\le \frac{4(\lambda_M + \lambda_C)}{q_\ell \lambda_M \lambda_C \left(\frac{p c}{2}\right)^k (1 - c)^{2p n - k}} \\
               &= \frac{4}{q_\ell \alpha c^k (1 - c)^{2p n - k}} + \frac{4}{q_\ell \beta \left(\frac{p}{2}\right)^k}.
    \end{align*}

    We want both terms to be $o((\frac{n}{k})^k)e^{O(k)}$. For the first term it is sufficient if the three following conditions hold. (i) $c^k = \omega((\frac{k}{n})^k)$, which holds if $c = \omega(\frac{k}{n})$. (ii) $(1 - c)^{2p n - k} = e^{-O(k)}$. For this it is sufficient to have $p c = O(\frac{k}{n})$, since then we have
    \begin{align*}
        (1 - c)^{2p n - k} = (1 - c)^{\frac{1}{c}(2p c n - k c)} \ge \left(\frac{1}{4}\right)^{2 p c n} = \left(\frac{1}{4}\right)^{O(k)} = e^{-O(k)}. 
    \end{align*}
    (iii) $\alpha = \omega((\frac{k}{nc})^k)$, since this implies 
    \begin{align*}
        \frac{4}{q_\ell \alpha c^k (1 - c)^{2p n - k}} = \frac{o\left(\left(\frac{nc}{k}\right)^k\right)}{c^k e^{-O(k)}} = o\left(\left(\frac{n}{k}\right)^k\right)e^{O(k)}.
    \end{align*}

    For the second term it is enough that the following two conditions hold. (i) $(\frac{p}{2})^k = \omega((\frac{k}{n})^k)$, for which it is sufficient to have $p = \omega(\frac{k}{n})$. (ii) $\beta$ should not be too small, namely, $\beta = \omega((\frac{2k}{p n})^k)$, since it implies
    \begin{align*}
        \frac{4}{q_\ell \beta \left( \frac{p}{2} \right)^k} = o\left( \left( \frac{p n}{2k} \right)^k \right) \left( \frac{p}{2} \right)^{-k}  = o\left(\left(\frac{n}{k}\right)^k\right). & \qedhere
    \end{align*}
\end{proof}

We find it interesting to show that the standard parameter setting does not give us such a good upper bound. Note, however, that our lower bound given in Theorem~\ref{thm:upper-bound-P} allows the actual runtime of the \ollga with standard parameters to be better. 


\begin{theorem}\label{th:standard-parameters-runtime}
    Let $k \in [2..\lfloor\frac{n}{4}\rfloor]$.
    Assume that $p = \frac{\lambda}{n}$, $c = \frac{1}{\lambda}$ and
    $\lambda_M = \lambda_C = \lambda$ for some $\lambda \in [2k..n]$.
    
    If $\lambda \le (2n)^{\frac{k}{k + 1}}$, then the expected runtime of \ollga on $\jump_k$ is 
        $E[T_I] = O(2^kn^k\lambda^{-2})$
	iterations and 
        $E[T_F] = O(2^kn^k\lambda^{-1})$
    fitness evaluations. 

    If $\lambda \ge (2n)^{\frac{k}{k + 1}}$, then the expected runtime of \ollga on $\jump_k$ is 
        $E[T_I] = O(\lambda^{k - 1})$
	iterations and 
        $E[T_F] = O(\lambda^k)$
    fitness evaluations. 

\end{theorem}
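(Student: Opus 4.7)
The plan is a direct substitution into Theorem~\ref{thm:runtime}. Since the assumption $\lambda \ge 2k$ gives $p = \lambda/n \ge 2k/n$, Theorem~\ref{thm:runtime} applies, and I only need to simplify the two quantities appearing in the $\min$'s.

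First, I would compute
\[
\lambda_M \left(\frac{p}{2}\right)^k = \lambda \cdot \frac{\lambda^k}{2^k n^k} = \frac{\lambda^{k+1}}{2^k n^k},
\qquad
\lambda_C c^k (1-c)^{2pn - k} = \lambda^{1-k} \left(1 - \tfrac{1}{\lambda}\right)^{2\lambda - k}.
\]
The key auxiliary estimate is that $(1 - 1/\lambda)^{2\lambda - k} = \Theta(1)$. The upper bound $(1-1/\lambda)^{2\lambda - k} \le 1$ is trivial since $2\lambda - k \ge 3\lambda/2 > 0$ (as $k \le \lambda/2$). For the lower bound I would write $(1-1/\lambda)^{2\lambda - k} = (1-1/\lambda)^{2\lambda} \cdot (1-1/\lambda)^{-k}$, use the standard estimate $(1-1/\lambda)^{\lambda} \ge 1/4$ for $\lambda \ge 2$, and drop the second factor (which is $\ge 1$), giving $\ge 1/16$. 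In particular, the second $\min$ argument is $\Theta(\lambda^{1-k})$, which for $k \ge 2$ and $\lambda \ge 2k \ge 4$ is strictly less than one, so $\min\{1, \lambda_C c^k (1-c)^{2pn-k}\} = \Theta(\lambda^{1-k})$.

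Next, I would split according to whether the first $\min$ saturates. The critical threshold is $\lambda^{k+1}/(2^k n^k) = 1$, i.e.\ $\lambda = (2n)^{k/(k+1)}$. In the regime $\lambda \le (2n)^{k/(k+1)}$ we have $\lambda_M (p/2)^k \le 1$, so Theorem~\ref{thm:runtime} gives
\[
E[T_I] \le \frac{4}{q_\ell} \cdot \frac{2^k n^k}{\lambda^{k+1}} \cdot \Theta(\lambda^{k-1}) = O\!\left(\frac{2^k n^k}{\lambda^2}\right),
\]
and the fitness-evaluation bound follows by multiplying by $\lambda_M + \lambda_C = 2\lambda$. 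In the regime $\lambda \ge (2n)^{k/(k+1)}$ the first $\min$ equals $1$, yielding $E[T_I] = O(\lambda^{k-1})$ and $E[T_F] = O(\lambda^k)$.

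The argument is essentially routine algebra; the only step requiring any care is the two-sided bound on $(1-1/\lambda)^{2\lambda - k}$, and the reason it goes through is precisely the hypothesis $\lambda \ge 2k$ which forces $k \le \lambda/2$ and prevents the $(1-1/\lambda)^{-k}$ factor from blowing up. No other difficulty is expected; the case split is driven entirely by which branch of the first $\min$ is active, and it is readily verified that the second $\min$ never saturates in either case.
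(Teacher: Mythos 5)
Your proposal is correct and follows essentially the same route as the paper's own proof: substitute the standard parameters into Theorem~\ref{thm:runtime}, observe that the second minimum never saturates while the first one switches at $\lambda = (2n)^{k/(k+1)}$, and bound $(1-1/\lambda)^{2\lambda-k}$ by a constant. Your explicit two-sided $\Theta(1)$ estimate for that factor is slightly more careful than the paper's, but the argument is the same.
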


\begin{proof}
    Since the standard parameter setting with $\lambda \ge 2k$ satisfies the conditions of Theorem~\ref{thm:runtime}, we obtain
    \begin{align*}
        E[T_I] &\le \frac{4}{q_\ell\min\left\{1, \lambda \left(\frac{\lambda}{2n}\right)^k \right\} \min\left\{1, \lambda \frac{1}{\lambda^k} \left(1 - \frac 1\lambda\right)^{2\lambda - k} \right\}}.
    \end{align*}

    Note that the second minimum in the denominator is always equal to its second argument. To understand the first minimum we consider two cases.

    \textbf{Case 1.} When $\lambda \le (2n)^{\frac{k}{k + 1}}$, the first minimum is equal to its second argument. Therefore, we have 

    \begin{align*}
      E[T_I] &\le \frac{4}{q_\ell \lambda^2 \left(\frac{\lambda}{2n\lambda}\right)^k \left(1 - \frac{1}{\lambda}\right)^{2\lambda - k}} \\
             &\le \frac{4(2n)^k}{q_\ell \lambda^2 \left(1 - \frac{1}{\lambda}\right)^{2\lambda}} = O\left(\frac{(2n)^k}{\lambda^2}\right).
    \end{align*}

    In each iteration the \ollga performs $2\lambda$ fitness evaluations, thus we have
    \begin{align*}
        E[T_F] &= 2\lambda E[T_I] = O\left(\frac{(2n)^k}{\lambda}\right).
    \end{align*}

    \textbf{Case 2.} When $\lambda \ge (2n)^{\frac{k}{k + 1}}$, first minimum is equal to its first argument. Therefore, we have 

    \begin{align*}
      E[T_I] &\le \frac{4}{q_\ell \lambda \frac{1}{\lambda^k} \left(1 - \frac{1}{\lambda}\right)^{2\lambda - k}} \\
             &\le \frac{4\lambda^{k - 1}}{q_\ell \left(1 - \frac{1}{\lambda}\right)^{2\lambda}} = O\left(\lambda^{k - 1}\right).
    \end{align*}

    In each iteration the \ollga performs $2\lambda$ fitness evaluations, thus we have
    \begin{align*}
        E[T_F] &= 2\lambda E[T_I] = O\left(\lambda^k\right).
    \end{align*}

    Note that this upper bound is minimized when $\lambda = (2n)^{\frac{k}{k + 1}}$ (rounded up or down), in this case the expected runtime is
    \begin{align*}
      E[T_F] &= O\left((2n)^{\frac{k^2}{k + 1}}\right) = O\left((2n)^{k - 1 + \frac{1}{k + 1}}\right).
    \end{align*}

\end{proof}

\subsection{Reaching the Local Optimum}
\label{sec:total-runtime}
In the previous section we showed that the \ollga with non-standard parameters setting can find the global optimum of $\jump_k$ much faster than any standard mutation-based algorithms if the algorithms are started in the local optimum. However, the non-standard parameter setting includes an unnaturally large population size, which makes each iteration costly. At the same time, there is no guarantee that we increase the fitness by much in one iteration, which makes us pay with many fitness evaluations before we reach the local optimum. Hence we question how much the runtime with this parameter setting increases when we start at a random bit string. In this section we show that slightly changing the parameters we can obtain the runtime which is only by a $\sqrt{n}$ factor greater than the runtime when we start in the local optimum. The main result of this section is the following theorem.

\begin{theorem}
    \label{thm:optimal-runtime}
    Let $k \le \frac{n}{4}$. If $\lambda_M = \lambda_C = \frac{1}{\sqrt{n}} \sqrt{\frac{n}{k}}^k $ and $p = c = \sqrt{\frac{k}{n}}$, then the expected runtime of the \ollga with any initialization on the $\jump_k$ function is at most $\sqrt{n}\sqrt{\frac{n}{k}}^k e^{O(k)}$ fitness evaluations. 
\end{theorem}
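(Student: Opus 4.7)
The plan is to split the expected runtime into the time $T_1$ to reach the local optimum of $\jump_k$ from the initial random individual and the time $T_2$ to jump from the local optimum to the global optimum, so that $E[T_F] \le E[T_1] + E[T_2]$ by the strong Markov property at the first hitting time of the local optimum (the local optimum is an absorbing set in the sense that, once reached, the elitist \ollga never accepts a point in the fitness valley). Phase~2 is then handled directly by Corollary~\ref{thm:optimal-jump-runtime}: with the chosen $\lambda = \lambda_M = \lambda_C = \frac{1}{\sqrt{n}}\sqrt{n/k}^{k}$, the admissibility condition $\lambda \le 2^{k}\sqrt{n/k}^{k}$ of the corollary holds trivially since $1/\sqrt{n} \le 2^{k}$, and the corollary yields $E[T_2] \le (n/k)^{k} e^{O(k)}/\lambda = \sqrt{n}\sqrt{n/k}^{k} e^{O(k)}$ fitness evaluations.

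The heart of the proof is therefore to bound $E[T_1]$ by the same quantity $\sqrt{n}\sqrt{n/k}^{k} e^{O(k)}$. My plan is to show that with $p = c = \sqrt{k/n}$ the \ollga reaches the local optimum in $O(n)\cdot e^{O(k)}$ iterations in expectation; multiplying by the per-iteration cost of $\lambda_M + \lambda_C = 2\lambda$ fitness evaluations then gives exactly the desired budget. The iteration bound would come from a fitness-level argument on the distance $d = n - \om(x)$ to the optimum. For $d \ge k+1$, I would lower-bound the probability that one iteration strictly increases $\om(x)$ as follows: condition on the constant-probability event $\ell \in [pn, 2pn]$ from Lemma~\ref{lem:l-bits}, then use Lemma~\ref{lem:hypergeometric} on the hypergeometric count of zero-bits flipped in a single mutant to show that with good probability the mutation winner $x'$ has a strictly positive excess of zero-to-one over one-to-zero flips, and finally show that one of the $\lambda_C$ crossover offspring retains exactly one such beneficial flip while copying the other $\ell-1$ differing positions from $x$. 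This factorizes into a mutation-side and a crossover-side contribution analogous to Lemmas~\ref{lem:pr-successful-mut} and~\ref{lem:pr-successful-cr}; in particular, the crossover-side contribution involves an $(1-c)^{\ell}$ term that is $e^{-O(k)}$ under $\ell = O(\sqrt{kn})$ and $c = \sqrt{k/n}$, exactly as in the escape analysis.

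The main obstacle is obtaining the right dependence on $d$ in the per-iteration improvement probability so that summing reciprocals over the $n-k$ relevant fitness levels gives a linear rather than a quasi-linear bound on the expected number of iterations. A naive fitness-level method would leave a spurious $\log n$ factor, which is not absorbed by $e^{O(k)}$ in general. The cleanest remedy should be either (i) a multiplicative drift argument on $d$, exploiting that a single iteration flips $pn = \sqrt{kn}$ bits in expectation and so should contribute expected net progress proportional to $(d/n)\sqrt{kn}$, or (ii) a refined fitness-level argument exploiting that each successful iteration typically advances by $\Omega(k)$ bits so that only $O(n/k)$ successful iterations suffice. Either route is consistent with the claimed $\sqrt{n}\sqrt{n/k}^{k} e^{O(k)}$ bound, and combining the resulting estimate for $E[T_1]$ with the bound for $E[T_2]$ above completes the proof.
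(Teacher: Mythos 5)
Your overall architecture matches the paper's: the runtime is split into the time to reach the local optimum and the time to escape it, and the escape phase is handled exactly as in the paper by invoking Corollary~\ref{thm:optimal-jump-runtime} with $\lambda = \frac{1}{\sqrt{n}}\sqrt{n/k}^{\,k}$, giving $\sqrt{n}\sqrt{n/k}^{\,k}e^{O(k)}$. The problem is that the first phase, which is the entire content of the paper's Theorem~\ref{th:local-opt-runtime}, is left as an open sketch: you state that a naive fitness-level argument would incur a $\log n$ factor and then offer two candidate remedies (multiplicative drift, or $\Omega(k)$ progress per successful iteration) without carrying either out. This is a genuine gap, and moreover the obstacle you identify is not the real one. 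Because the mutation flips $\ell = \Theta(\sqrt{nk})$ bits (Lemma~\ref{lem:l-bits-big-gamma}), a single mutant flips at least one zero-bit with probability at least $1-(1-k/n)^{\sqrt{nk}} \ge \tfrac12\min\{1, k^{3/2}/\sqrt{n}\}$ \emph{uniformly} over all distances $d \ge k+1$, and the $\lambda_M \ge \sqrt{n}/(k\sqrt{k})$ mutants amplify this to $\Omega(1)$ (Lemma~\ref{lem:local-opt-mut}). Combined with the crossover success probability $e^{-O(k)}$ of Lemma~\ref{lem:local-opt-cross}, every fitness level is left with probability $e^{-O(k)}$ independent of $d$, so the plain fitness-level sum over at most $n-k$ levels already gives $ne^{O(k)}$ iterations with no logarithmic loss. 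Neither a drift argument nor $\Omega(k)$ progress per success is needed; indeed, the paper's successful iteration gains only a single critical bit, so the $\Omega(k)$-progress route you propose would not even be available.

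A second, more local issue: you require the mutation winner to have ``a strictly positive excess of zero-to-one over one-to-zero flips.'' That is the wrong event and essentially never happens in this regime: with $\ell = \Theta(\sqrt{nk})$ flips and only $d$ zero-bits available, the winner is typically far \emph{worse} than the parent, and the crossover is what repairs the damage by inheriting a single beneficial flip. What actually has to be shown on the mutation side is that some offspring flips at least one zero-bit \emph{and} does not land in the fitness valley (i.e., does not flip more than $(\ell+d-k)/2$ zero-bits), since a valley offspring would lose the intermediate selection to offspring that made no progress at all. Controlling this valley event is where the hypergeometric concentration of Lemma~\ref{lem:hypergeometric} is really spent, via a case analysis over $d$; your sketch points at the right tool but aims it at the wrong target. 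Until the phase-one bound of $ne^{O(k)}$ iterations is actually established along these lines, the proof is incomplete.
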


To prove Theorem~\ref{thm:optimal-runtime} we first analyse the runtime until the \ollga reaches the local optimum of $\jump_k$.

\begin{theorem}
\label{th:local-opt-runtime}
    Let $\lambda_M = \lambda_C = \lambda \ge  \frac{n}{k}$ and $p = c = \sqrt{\frac{k}{n}}$.
    Then the expected time until the \ollga reaches the local optimum of $\jump_k$ with $k \le \frac{n}{4}$ is at most $E[T_I] = ne^{O(k)}$ iterations.
\end{theorem}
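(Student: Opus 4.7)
The plan is to apply the classical fitness-level theorem with the levels indexed by the distance $d(x) = n - \om(x)$, restricted to $d \in [k+1..n]$, where $\jump_k$ agrees with $\onemax + k$ and strict fitness-improvement is equivalent to strictly increasing $\om(x)$. A random initial search point lies in this range with probability $1 - e^{-\Omega(n)}$, so we restrict to this case; starting inside the valley can be handled by a short separate argument (a single iteration suffices to escape with high probability once $\lambda \ge n/k$) and only contributes a lower-order term. It then suffices to show that for every $d \in [k+1..n]$ the probability $p_d$ of producing a strictly fitter offspring in one iteration satisfies $p_d \ge \Omega(\min\{1, \sqrt{n/k}\, e^{-O(k)}\})$. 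The fitness-level bound then gives $E[T_I] \le \sum_{d=k+1}^{n} 1/p_d \le n \cdot O(\max\{1, \sqrt{k/n}\, e^{O(k)}\}) \le n\, e^{O(k)}$, using $\sqrt{nk} \le n$ for $k \le n$.

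For the bound on $p_d$, we first apply Lemma~\ref{lem:l-bits} to condition on $\ell \in [pn, 2pn] = [\sqrt{kn}, 2\sqrt{kn}]$, which has probability at least $q_\ell = \Omega(1)$. Given $\ell$ in this range, a single mutant fails to flip any of the $d$ zero-bits with probability at most $(1-d/n)^{\ell}$, so the probability that none of the $\lambda_M \ge n/k$ mutants flips a zero-bit is at most $\exp(-d\ell\lambda_M/n) \le \exp(-\sqrt{kn}) = o(1)$. In the complementary event, the mutation winner $x'$ (which maximises the number of zero-bits of $x$ flipped to one, because all mutants have the same Hamming distance $\ell$ to $x$) satisfies $J^* \coloneqq \#\{\text{good flips in } x'\} \ge 1$. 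Hence with probability $\Omega(1)$ we have both $\ell \le 2pn$ and $J^* \ge 1$.

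Conditional on these two events, we analyse the crossover phase via the clean event that a given offspring retains exactly one of the $J^*$ good flips and none of the $\ell - J^*$ bad flips; its probability per offspring is $J^*\, c\,(1-c)^{\ell-1} \ge c\,(1-c)^{2pn-1}$. Using $(1-x)^{1/x} \ge 1/4$ for $x \in (0, 1/2]$ yields $(1-c)^{2pn-1} \ge ((1-\sqrt{k/n})^{\sqrt{n/k}})^{2k} \ge e^{-O(k)}$, while $\lambda_C c \ge \sqrt{n/k}$. Lemma~\ref{lem:bernoulli} then shows that at least one of the $\lambda_C$ offspring realises this event with probability at least $\tfrac{1}{2}\min\{1, \sqrt{n/k}\, e^{-O(k)}\}$. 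Any such offspring $y$ satisfies $\om(y) = \om(x) + 1 \le n - k$, so it strictly improves the $\jump_k$-value of $x$ and is accepted by the elitist selection, completing the bound on $p_d$.

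The main technical delicacy is the crossover step: one must identify an event that has non-negligible probability despite the roughly $\sqrt{kn}$ bad flips present in $x'$ and that produces an offspring actually improving the fitness rather than overshooting into the valley. Restricting to ``keep exactly one good flip, reject all others'' resolves both issues at once, since the resulting offspring has $\om$ exceeding $\om(x)$ by exactly one, and the probability $c(1-c)^{\ell-1}$ is precisely what matches the claimed $e^{O(k)}$ factor once $\lambda_C \ge n/k$ and $\ell$ is concentrated around $\sqrt{kn}$.
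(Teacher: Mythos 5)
Your overall architecture (fitness levels over the distance $d$, one ``critical'' zero-bit flipped in the mutation phase, crossover keeping exactly that one flip and discarding all $\ell-1$ others) is the same as the paper's, and your crossover analysis is sound: keeping exactly one good flip gives $\om(y)=\om(x)+1\le n-k$, so the offspring cannot overshoot into the valley, and $c(1-c)^{\ell-1}\lambda_C \ge \sqrt{n/k}\,e^{-O(k)}$ is exactly the paper's Lemma~\ref{lem:local-opt-cross}. However, there is a genuine gap in your mutation-phase step, and it sits precisely where the paper identifies the main difficulty. You assert that the mutation winner $x'$ ``maximises the number of zero-bits of $x$ flipped to one, because all mutants have the same Hamming distance $\ell$ to $x$''. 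That would be true if selection were by \onemax, but selection is by $\jump_k$: a mutant that flips more than $\tfrac{\ell+d-k}{2}$ zero-bits lands in the fitness valley and has $\jump_k$-value $n-\om<k$, i.e.\ \emph{lower} fitness than a mutant that flips no zero-bit at all (whose fitness is $\om(x)-\ell+k\ge k$ whenever $\om(x)\ge\ell$). So the event ``at least one mutant flips a zero-bit'' does not imply $J^*\ge 1$ for the winner: if every mutant that flips a zero-bit happens to flip too many and falls into the valley, the winner is a mutant with $J^*=0$ and the iteration cannot improve the fitness.

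To close this gap you must lower-bound the probability that some mutant flips \emph{at least one but not too many} zero-bits, i.e.\ control $\Pr[\ell_0>\tfrac{\ell+d-k}{2}]$ for the hypergeometrically distributed $\ell_0$. This is exactly the content of the paper's Lemma~\ref{lem:local-opt-mut}, which uses the Chernoff bound for the hypergeometric distribution (Lemma~\ref{lem:hypergeometric}) and a case analysis over where in $[k+1,n]$ the worst $d$ lies, showing this overshoot probability is $e^{-\Omega(n)}$ and hence negligible against $\Pr[\ell_0\ge 1]$. The repair is therefore available, but it constitutes the bulk of the technical work in the paper's proof, and your proposal skips it by an appeal to a selection property that $\jump_k$ does not have. (Minor, non-fatal differences: you condition on $\ell\in[pn,2pn]$ via Lemma~\ref{lem:l-bits} where the paper uses $\ell\in[pn,\tfrac54 pn]$ via Lemma~\ref{lem:l-bits-big-gamma}; both only affect the constants hidden in $e^{O(k)}$.)
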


The main challenge in the proof of this theorem is that an offspring close to the optimum in the mutation phase can lie in the fitness valley and thus it is not selected as $x'$.
In this section we call the mutation phase \emph{successful} if the winner has at least one zero-bit of $x$ flipped to one.
If such a bit exists, we call it \emph{critical} and we call a mutation phase offspring which has such a bit and does not lie in the fitness valley \emph{good}. We write $p_M$ to denote the probability of a successful mutation phase.
We call the crossover phase \emph{successful} if the winner of the crossover phase has inherited the critical bit from $x'$ and all other bits are 
not changed compared to $x$ (hence, the name ``critical'', since we need such bit to be taken from $x'$ for a successful iteration). We write $p_C$ to denote the probability of this event.

To prove Theorem~\ref{th:local-opt-runtime} we show two auxiliary lemmas for the mutation and crossover phases respectively.
\begin{lemma}
\label{lem:local-opt-mut}
    Let $k \le \frac{n}{4}$. If $\lambda_M \ge \frac{\sqrt{n}}{k\sqrt{k}}$ and $\ell \in \left[\sqrt{nk}, \frac{5\sqrt{nk}}{4}\right]$,
    then $p_M = \Theta(1)$.
\end{lemma}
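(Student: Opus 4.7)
The plan is to show that with constant probability at least one of the $\lambda_M$ mutation offspring is ``good'' — i.e., has $\ell_0\ge 1$ critical bits flipped and lies outside the fitness valley — and that such a good offspring forces the mutation winner to also satisfy $\ell_0\ge 1$. The second implication is purely structural: since all $\lambda_M$ offspring flip exactly $\ell$ bits, any non-valley offspring has $\jump_k$-value at least $k$ whereas any valley offspring has value at most $k-1$; so the winner cannot lie in the valley whenever a good offspring exists, and since non-valley fitness is strictly monotone in $\ell_0$, the winner's $\ell_0$ is at least that of the good offspring.

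To lower bound the probability $q$ that a single offspring is good, let $d = n-\om(x)\ge k+1$ (we have not yet reached the local optimum); then $\ell_0$ is hypergeometric with mean $\ell d/n$. The key observation is that $\ell_0\le \ell/2$ implies $\om(\text{offspring}) = \om(x)+2\ell_0-\ell\le \om(x)\le n-k-1$, placing the offspring outside the valley automatically. Hence $q\ge \Pr[\ell_0\ge 1]-\Pr[\ell_0>\ell/2]$. Using $\Pr[\ell_0=0]=\binom{n-d}{\ell}/\binom{n}{\ell}\le (1-d/n)^\ell$ together with Lemma~\ref{lem:bernoulli} yields $\Pr[\ell_0\ge 1]\ge \tfrac12\min\{1,\ell d/n\}$, while Lemma~\ref{lem:hypergeometric} with $(1+\delta)E[\ell_0]=\ell/2$ gives, when $d\le n/4$, $\Pr[\ell_0>\ell/2]\le e^{-\ell/12}\le e^{-\sqrt{nk}/12}$, which is negligible compared to $\ell d/n\ge k^{3/2}/\sqrt{n}$. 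So $q=\Omega(\min\{1,\ell d/n\})\ge \Omega(k^{3/2}/\sqrt{n})$ in this regime.

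For the complementary regime $d>n/4$ we have $E[\ell_0]>\ell/4=\omega(1)$, hence $\Pr[\ell_0\ge 1]=1-o(1)$; then either $d\ge k+\ell$ (so $\om(\text{offspring})\le n-k$ always and the valley is unreachable), or $d$ lies in the narrow window $[\ell-k+1,\ell+k-1]$ in which the valley-inducing band for $\ell_0$ (of width $(k-1)/2$ centered at $(\ell+d)/2$) intersects the support. In the latter case $d\approx\ell=\Theta(\sqrt{nk})$ and $E[\ell_0]\approx k$, while the band center $(\ell+d)/2\approx\ell$ exceeds $E[\ell_0]$ by a factor $\Theta(\sqrt{n/k})$ which is at least $2$ because $k\le n/4$; a further application of Lemma~\ref{lem:hypergeometric} therefore gives an exponentially small valley probability and $q=\Omega(1)$.

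Combining all cases, the worst case is $d=k+1$ where $q\ge c\, k^{3/2}/\sqrt{n}$ for an absolute constant $c$, so $\lambda_M q\ge (\sqrt{n}/(k\sqrt{k}))\cdot c\, k^{3/2}/\sqrt{n}=\Omega(1)$, and applying Lemma~\ref{lem:bernoulli} to $1-(1-q)^{\lambda_M}$ yields $p_M=\Omega(1)$. The main obstacle will be handling the $d>n/4$ regime cleanly, since the naive tail bound on $\Pr[\ell_0>\ell/2]$ breaks down there and one has to argue directly that the valley band sits polynomially above the hypergeometric mean before invoking the Chernoff inequality again.
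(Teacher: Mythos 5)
Your proposal follows essentially the same route as the paper's proof: lower-bound the probability $q_M$ that a single mutant both flips a zero-bit and avoids the fitness valley via $q_M \ge \Pr[\ell_0\ge 1]-\Pr[\text{valley}]$, control the first term with Lemma~\ref{lem:bernoulli} applied to $\Pr[\ell_0=0]\le(1-d/n)^{\ell}$, control the second with the hypergeometric Chernoff bound of Lemma~\ref{lem:hypergeometric}, and amplify over the $\lambda_M$ offspring with Lemma~\ref{lem:bernoulli} again, so that $\lambda_M\ge\sqrt{n}/(k\sqrt{k})$ compensates the worst-case $q_M=\Omega(k^{3/2}/\sqrt{n})$ at $d=k+1$. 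Your explicit structural argument that one good offspring forces the winner to be successful (valley fitness is below $k$, non-valley fitness is at least $k$ and monotone in $\ell_0$) is left implicit in the paper but is the same observation. The only real divergence is how the valley probability is made uniform in $d$: the paper keeps the exact threshold $\ell_0>\frac{\ell+d-k}{2}$ and optimizes the Chernoff exponent over $d$ (checking the interior critical point $d=\frac{(\ell-k)n}{n-2\ell}$ and the boundary values $d=k+1$ and $d=n$), whereas you substitute the cruder sufficient condition $\ell_0\le\ell/2$ for $d\le n/4$ and argue separately for $d>n/4$.

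That split can be made to work, but your sketch of the $d>n/4$ regime contains a quantitative error. When $k=\Theta(n)$ the window $d\in[\ell-k+1,\ell+k-1]$ is not narrow, $E[\ell_0]=\ell d/n$ is not $\approx k$, and the valley threshold does not exceed the mean by a factor $\Theta(\sqrt{n/k})\ge 2$ as you claim: for $k=n/4$, $\ell=n/2$, $d=3n/4$ the threshold is $\frac{\ell+d-k}{2}=n/2$ while the mean is $3n/8$, a factor of only $4/3$. A constant-factor separation still yields an $e^{-\Omega(n)}$ bound, so the conclusion survives, but establishing that the separation is bounded away from $1$ over the whole window requires exactly the kind of explicit optimization over $d$ that the paper performs and that your sketch defers. (Your dichotomy for $d>n/4$ also omits the subcase $d\le\ell-k$, where the valley is unreachable for the symmetric reason.) These are repairable, but as written the large-$d$ case is not yet a proof.
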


\begin{proof}
    We denote the current distance to the global optimum by $d$. In order to create a good mutant we need to flip at least one zero-bit, but we also need not to flip too many zero-bits (namely, not more than $\frac{\ell + d - k}{2}$) not to reach the fitness valley. Let $\ell_0$ be the number of the flipped zero-bits in a fixed mutant. Then the probability $q_M$ that the mutant is good is
    \begin{align*}
      q_M = 1 - \Pr[\ell_0 = 0] - \Pr\left[\ell_0 > \frac{\ell + d - k}{2}\right]
    \end{align*}

    We estimate the probability that we have not flipped a single zero-bit using Lemma~\ref{lem:bernoulli}.

    \begin{align*}
        \Pr[\ell_0 = 0] &= \frac{\binom{\om(x)}{\ell}}{\binom{n}{\ell}} \le \left(\frac{n - k}{n}\right)^\ell \le \left(1 - \frac{k}{n}\right)^{\sqrt{nk}} \\
                        &= 1 -  \left(1 - \left(1 - \frac{k}{n}\right)^{\sqrt{nk}}\right) \le 1 - \frac{1}{2}\min\left\{1, \frac{k\sqrt{k}}{\sqrt{n}} \right\}
    \end{align*}

    To estimate the probability that we end up in the fitness valley, we use a Chernoff bound for the hyper-geometric distribution (Lemma~\ref{lem:hypergeometric}). Note that the expected value of $\ell_0$ is $\frac{\ell d}{n}$. Hence, we have

    \begin{align*}
      \Pr\left[\ell_0 > \frac{\ell + d - k}{2}\right] &= \Pr\left[\ell_0 > \left(1 + \left(\frac{\ell + d - k}{2} \cdot \frac{n}{\ell d} - 1\right) \right) \frac{\ell d}{n} \right] \\
      &\le \exp\left(- \frac{1}{3}\left(\frac{\ell + d - k}{2} \cdot \frac{n}{\ell d} - 1\right)^2 \frac{\ell d}{n}\right).
    \end{align*}
    Considering the argument of the exponential as a function of $d$ and computing its derivative (we omit tedious details), one can see that its value is maximized either at $d = \frac{(\ell - k) n}{n - 2\ell}$, if this value lies in $[k + 1, n]$, or at the bounds of this interval.  
    
    \textbf{For $d = \frac{(\ell - k) n}{n - 2\ell}$ we have}
    \begin{align*}
      \Pr\left[\ell_0 > \frac{\ell + d - k}{2}\right] &\le \exp\left(- \frac{1}{3}\left(\frac{\ell - k + \frac{(\ell - k) n}{n - 2\ell}}{2} \cdot \frac{n}{\ell \frac{(\ell - k) n}{n - 2\ell}} - 1\right)^2 \frac{\ell \frac{(\ell - k) n}{n - 2\ell}}{n}\right) \\
      &= \exp\left(- \frac{1}{3}\left(\frac{\left(1 + \frac{n}{n - 2\ell}\right)n}{2\ell \frac{n}{n - 2\ell}} - 1\right)^2 \frac{\ell(\ell - k)}{n - 2\ell}\right) \\
      &= \exp\left(-\frac{1}{3} \left(\frac{n - \ell}{\ell} - 1\right)^2 \frac{\ell(\ell - k)}{n - 2\ell}\right) \\
      &= \exp\left(-\frac{1}{3} \cdot \frac{(n - 2\ell)(\ell - k)}{\ell}\right) 
    \end{align*} 
    Since $n \ge 4k$ and $\ell \ge \sqrt{nk} \ge 2k$, we have $(\ell - k) \ge \frac{\ell}{2}$. Hence, if $\ell \le \frac{n}{3}$, we have
    \begin{align*}
      \Pr\left[\ell_0 > \frac{\ell + d - k}{2}\right] &\le \exp\left(- \frac{(n - 2\ell) \frac{\ell}{2}}{3\ell}\right) \le \exp\left(- \frac{n - \frac{2n}{3}}{6}\right) \le \exp\left(-\frac{n}{18}\right).
    \end{align*} 
    Otherwise, if $\ell > \frac{n}{3}$, then $d = \frac{(\ell - k) n}{n - 2\ell} \le n$ only if $(n - 2\ell) \ge (\ell - k)$. Therefore,
    \begin{align*}
      \Pr\left[\ell_0 > \frac{\ell + d - k}{2}\right] &\le \exp\left(-\frac{1}{3} \cdot \frac{(\ell - k)^2}{\ell}\right) \le \exp\left(-\frac{(\ell / 2)^2}{3\ell}\right) \\
      &= \exp\left(-\frac{\ell}{12}\right) < \exp\left(-\frac{n}{36}\right).
    \end{align*}
    
    \textbf{For $d = k + 1$ we have}
    \begin{align*}
      \Pr\left[\ell_0 > \frac{\ell + d - k}{2}\right] &\le \exp\left(- \frac{1}{3}\left(\frac{(\ell + 1)n}{2\ell (k + 1)} - 1\right)^2 \frac{\ell (k + 1)}{n}\right) \\
      \text{(since $\tfrac{\ell + 1}{\ell} > 1$ and $\tfrac{n}{2(k + 1)} > 1$)}
      &\le \exp\left(- \frac{1}{3}\left(\frac{n}{2(k + 1)} - 1\right)^2 \frac{\ell (k + 1)}{n}\right) \\
      &= \exp\left(-  \frac{\ell \left(n - 2(k + 1)\right)^2}{12(k + 1)n}\right) \\
      \text{(since $\ell \ge 2k$ and $k \le \frac{n}{4}$)}
      &\le \exp\left(- \frac{2k}{12(k + 1)} \cdot \frac{\left(n - \frac{n}{2} - 2\right)^2}{n}\right) \\
      &\le \exp\left(- \frac{1}{9} \cdot \left(\frac{n}{4} - 2\right)\right) \\
      &\le \exp\left(- \frac{n}{36} + \frac{2}{9}\right),
    \end{align*}
    if $n$ is large enough.

    \textbf{For $d = n$ we have}
    \begin{align*}
      \Pr\left[\ell_0 > \frac{\ell + d - k}{2}\right] &\le \exp\left(- \frac{1}{3}\left(\frac{(\ell + n - k)n}{2\ell n} - 1\right)^2 \ell\right) \\
      &\le \exp\left(- \frac{1}{3}\left(\frac{\ell + n - k - 2\ell}{2\ell} \right)^2 \ell\right) \\
      &= \exp\left(-\frac{(n - k - \ell)^2}{12\ell}\right) \\
      \text{(since $k \le \frac{n}{4}$ and $\ell \le \tfrac{5}{4}\sqrt{kn} \le \tfrac{5}{8}n$)}
      &\le \exp\left(-\frac{(n/8)^2}{12 \cdot (5n/8)}\right) = \exp\left(-\frac{n}{480}\right).
    \end{align*}

    Therefore, we estimate the probability to create a good mutant as 
    \begin{align*}
        q_M &\ge 1 - \Pr[\ell_0 = 0] - \Pr\left[\ell_0 > \frac{\ell + d - k}{2}\right] \\
               &\ge \frac{1}{2}\min\left\{1, \frac{k\sqrt{k}}{\sqrt{n}} \right\} - e^{-\frac{n}{480}} 
               \ge \frac{1}{4}\min\left\{1, \frac{k\sqrt{k}}{\sqrt{n}} \right\},
    \end{align*}
    where the last inequality holds when $n$ is at least some sufficiently large constant.
    If $\frac{k\sqrt{k}}{n} > 1$, this probability is already $\Omega(1)$ and hence $p_M = \Omega(1)$.

    Otherwise, by Lemma~\ref{lem:bernoulli} we compute
    \begin{align*}
        p_M &= 1 - (1 - q_M)^\lambda_M \ge \frac{1}{2}\min\left\{1, \lambda_M q_M\right\}.
    \end{align*}
    Since $\lambda_M \ge \frac{\sqrt{n}}{k\sqrt{k}}$, we have $\lambda_M q_M \ge 1$ and therefore, $p_M = \Omega(1)$.

    Finally, we note that for constant $n$ the probability $q_M$ is still positive, and hence $\Omega(1)$.
\end{proof}

We proceed with a lemma for the crossover phase.

\begin{lemma}
    \label{lem:local-opt-cross}
        Let $k \le \frac{n}{2}$. Assume that $c = \sqrt{\frac{k}{n}}$, $\lambda_C \ge \sqrt{\frac{n}{k}}$ and $\ell \in \left[\sqrt{nk}, \frac{5\sqrt{nk}}{4}\right]$,
        and there is at least one critical bit in $x'$.
        Then $p_C = e^{-O(k)}$.
\end{lemma}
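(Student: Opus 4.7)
The plan is to compute the probability $q_C$ that a single crossover offspring is \emph{successful} (in the sense just defined), multiply this up by the number $\lambda_C$ of crossover offspring via Lemma~\ref{lem:bernoulli}, and then show that the resulting lower bound on $p_C$ is of order $e^{-O(k)}$.

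First I would observe that the crossover operator only modifies the $\ell$ positions at which $x$ and $x'$ disagree, leaving the remaining $n - \ell$ agreement positions untouched. Consequently, a single offspring is successful exactly when the one critical position is flipped (so that $y$ inherits the one-bit from $x'$) and the other $\ell - 1$ disagreement positions are not flipped (so that $y$ agrees with $x$ there). Since each disagreement position is flipped independently with probability $c$, this yields the single-offspring success probability
\[
q_C \;=\; c\,(1-c)^{\ell - 1}.
\]

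Next I would plug in $c = \sqrt{k/n}$ and use the hypothesis $\ell \le \tfrac{5}{4}\sqrt{nk}$. Writing $\ell - 1 \le \tfrac{5}{4}\sqrt{nk} = \tfrac{5}{4}\, k\,\sqrt{n/k}$, the key estimate is
\[
(1-c)^{\ell - 1} \;\ge\; \bigl((1-c)^{1/c}\bigr)^{(5/4)k} \;\ge\; e^{-O(k)},
\]
where I use that for $c = \sqrt{k/n} \le 1/\sqrt{2}$ (guaranteed by $k \le n/2$) the quantity $(1-c)^{1/c}$ is bounded below by a positive constant, exactly in the spirit of the estimate $(1-x)^{-1/x} \le 4$ used in the proof of Corollary~\ref{thm:optimal-jump-runtime}. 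This gives $q_C \ge \sqrt{k/n}\, e^{-O(k)}$.

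Finally I would apply Lemma~\ref{lem:bernoulli} to the $\lambda_C$ independent crossover trials and use the assumption $\lambda_C \ge \sqrt{n/k}$:
\[
p_C \;=\; 1 - (1-q_C)^{\lambda_C} \;\ge\; \tfrac{1}{2}\min\bigl\{1,\, \lambda_C q_C\bigr\}
\;\ge\; \tfrac{1}{2}\min\bigl\{1,\, \sqrt{\tfrac{n}{k}} \cdot \sqrt{\tfrac{k}{n}}\, e^{-O(k)}\bigr\}
\;=\; e^{-O(k)}.
\]
There is no real obstacle here: the argument is essentially routine once one notices that crossover acts independently position by position and that $c$ and the upper bound on $\ell$ are tuned so that $\ell c$ is of order $k$, which produces the clean $e^{-O(k)}$ factor. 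The only thing to be slightly careful about is that $c$ may be as large as $1/\sqrt 2$, so one cannot directly invoke the $(1-x)^{-1/x} \le 4$ estimate valid for $x \le 1/2$; but the underlying fact that $(1-c)^{1/c}$ is bounded below by a positive constant on any interval $(0, 1 - \delta]$ still applies and suffices, since the eventual bound only needs to be of the form $e^{-O(k)}$.
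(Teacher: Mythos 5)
Your proof is correct and follows essentially the same route as the paper's: the same single-offspring success probability $q_C = c(1-c)^{\ell-1}$, the same rewriting of the exponent as $\tfrac{5}{4}k\cdot\tfrac{1}{c}$ to extract the $e^{-O(k)}$ factor, and the same application of Lemma~\ref{lem:bernoulli} with $\lambda_C \ge \sqrt{n/k}$. Your closing remark about $c$ possibly exceeding $1/2$ (since the lemma only assumes $k \le n/2$) is a valid point of care that the paper's proof glosses over, but it does not change the argument.
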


\begin{proof}
    To have a successful crossover offspring it is sufficient to take one critical bit from $x'$ and all other different bits from $x$.
    Thus the probability $q_C$ of generating one superior crossover offspring is
    \begin{align*}
        q_C &= c(1 - c)^{\ell - 1} 
            \ge \sqrt{\frac{k}{n}}\left(1 - \sqrt{\frac{k}{n}}\right)^{\frac{5\sqrt{nk}}{4} - 1} \\
            &\ge \sqrt{\frac{k}{n}}\left(1 - \sqrt{\frac{k}{n}}\right)^{\sqrt{\frac{n}{k}}\frac{5k}{4}} 
            = \sqrt{\frac{k}{n}}e^{-\Theta(k)}.
    \end{align*}
    Since we need only one of the $\lambda_C \ge \sqrt{\frac{n}{k}}$ offspring to be superior, by Lemma~\ref{lem:bernoulli} we have
    \begin{align*}
        p_C &= 1 - (1 - q_C)^{\lambda_C} \ge \frac{1}{2} \min\left\{1, \lambda_C\sqrt{\frac{k}{n}} e^{-\Theta(k)}\right\} = e^{-\Theta(k)}. \qedhere
    \end{align*}
\end{proof}

Now we are in position to prove Theorem~\ref{th:local-opt-runtime}
\begin{proof}[Proof of Theorem~\ref{th:local-opt-runtime}]
   We denote the probability to increase fitness in one iteration by $P$ and we estimate this probability as follows.
   \begin{align*}
       P \ge \Pr\left[\ell \in \left[p n, \frac{5p n}{4}\right]\right] \cdot p_M \cdot p_C.
   \end{align*}
   By Lemmas~\ref{lem:l-bits-big-gamma},~\ref{lem:local-opt-mut}, and~\ref{lem:local-opt-cross} we have
   \begin{align*}
       P \ge \left(\tfrac{1}{4} - o(1)\right) \cdot \Omega(1) \cdot e^{-O(k)} = e^{-O(k)}.
   \end{align*}

   Therefore the expected runtime (in terms of iterations) until the \ollga reaches the local optimum of $\jump_k$ is
   \begin{align*}
       E[T_I] \le \sum_{i = 0}^{n - k} \frac{1}{P} \le ne^{O(k)}. & \qedhere
   \end{align*}
\end{proof}

Finally, we prove the main result of this section, Theorem~\ref{thm:optimal-runtime}.

\begin{proof}[Proof of Theorem~\ref{thm:optimal-runtime}]

    By Theorems~\ref{thm:optimal-jump-runtime} and~\ref{th:local-opt-runtime} the upper bound on the total number of fitness evaluations of the \ollga with random initialization is 
    \begin{align*}
        E[T_F] \le \lambda ne^{O(k)} + \sqrt{\frac{n}{k}}^k\frac{e^{O(k)}}{\lambda}.
    \end{align*}

    With $\lambda = \frac{1}{\sqrt{n}}\sqrt{\frac{n}{k}}^k$, we have
    \begin{align*}
        E[T_F] &\le \frac{1}{\sqrt{n}}\sqrt{\frac{n}{k}}^k ne^{\Theta(k)} + \sqrt{\frac{n}{k}}^k\frac{e^{\Theta(k)}}{\frac{1}{\sqrt{n}}\sqrt{\frac{n}{k}}^k}\\
               &\le \sqrt{n}\sqrt{\frac{n}{k}}^k e^{\Theta(k)} + \sqrt{n}\sqrt{\frac{n}{k}}^k e^{\Theta(k)} 
               = \sqrt{n}\sqrt{\frac{n}{k}}^k e^{\Theta(k)}. \qedhere
    \end{align*}
    
\end{proof}

We note that $\lambda = \frac{1}{\sqrt{n}}\sqrt{\frac{n}{k}}^k$ is the value which minimizes our upper bound apart from the $e^{\Theta(k)}$ factor. We omit the proof of this fact, since it trivially follows from the minimization of a function $f(x) = ax + \frac{b}{x}$ via analysis of its derivative.

\section{Lower Bounds}\label{sec:lower-bounds}

In this section, we show that the parameters we chose in the previous section for the optimization starting in the local optimum are asymptotically optimal (apart from $e^{O(k)}$ factors), that is, that no choice of the parameters $\lambda_M$, $\lambda_C$, $p$, and $c$ leads to an asymptotically better runtime (apart from $e^{O(k)}$ factors). This in particular shows that our analysis in the previous section is tight, that is, that the runtime proven in Corollary~\ref{thm:optimal-jump-runtime} is of the right asymptotic order of magnitude (apart from $e^{O(k)}$ factors).

%
\begin{theorem}\label{thm:upper-bound-P}
  The probability $P$ that the \ollga finds the optimum of $\jump_k$ with $k \le \frac{n}{4}$ in one iteration if the current individual is in the local optimum is at most $\lambda_M (\lambda_C + 1) \left(\frac{k}{2n}\right)^k$.
\end{theorem}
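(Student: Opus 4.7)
The plan is to upper-bound $P$ by splitting the event that the all-ones string appears in one iteration into two subevents and applying a union bound within each: (i) some mutant $x^{(i)}$ equals $1^n$, and (ii) some crossover offspring $y^{(j)}$ equals $1^n$. For (i), a specific mutant $x^{(i)}$ equals $1^n$ iff $\ell = k$ and the $k$ flipped positions are precisely the $k$ zero-bits of $x$. Hence $\Pr[x^{(i)} = 1^n] = \Pr[\ell = k]\cdot\binom{n}{k}^{-1} = p^k(1-p)^{n-k}$, and a union bound over the $\lambda_M$ mutants gives the first contribution $\lambda_M p^k(1-p)^{n-k}$.

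For (ii), the key structural observation is that the biased crossover only flips positions where $x$ and $x'$ differ, so any coordinate that is $0$ in both $x$ and $x'$ stays $0$ in every offspring. Consequently, a crossover offspring can equal $1^n$ only if the winner $x'$ has all $k$ zero-bits of $x$ flipped; I will call such a mutant \emph{good}. To eliminate the awkward dependence on which mutant wins, I would couple the actual run with $\lambda_M$ hypothetical crossover phases, defining $y^{(i,j)}$ as the $j$-th offspring obtained by running the crossover with $x^{(i)}$ in place of $x'$. By construction $y^{(j)} = y^{(i^{*},j)}$ for the true winner $i^{*}$, and if some $y^{(j)}$ equals $1^n$ then $x^{(i^{*})}$ is good; hence a union bound over the $\lambda_M\lambda_C$ index pairs gives
\begin{align*}
\Pr[\exists j : y^{(j)} = 1^n] \le \lambda_M\lambda_C \Pr[x^{(1)}\text{ good and }y^{(1,1)} = 1^n].
\end{align*}

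Conditioning on $\ell$, the probability that $x^{(1)}$ is good equals $\binom{n-k}{\ell-k}/\binom{n}{\ell}$, and given a good $x^{(1)}$ the crossover produces $1^n$ iff all $k$ zero-positions of $x$ are taken from $x^{(1)}$ (probability $c^k$) and none of the other $\ell-k$ differing positions are (probability $(1-c)^{\ell-k}$). Summing over $\ell$, the factor $\binom{n}{\ell}$ cancels and the binomial theorem yields the clean closed form
\begin{align*}
\Pr[x^{(1)}\text{ good and }y^{(1,1)} = 1^n]
&= c^k p^k \sum_{m\ge 0}\binom{n-k}{m}(p(1-c))^m (1-p)^{(n-k)-m} \\
&= (pc)^k(1-pc)^{n-k}.
\end{align*}

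It then remains to establish the uniform inequality $q^k(1-q)^{n-k} \le (k/(2n))^k$ for every $q \in [0,1]$, which I will apply with $q = p$ and $q = pc$. Elementary calculus shows the maximum of $q^k(1-q)^{n-k}$ is attained at $q = k/n$, so the claim reduces to $(1-k/n)^{n-k} \le 2^{-k}$; this follows from the standard estimate $(1-k/n)^{n-k} \le e^{-k(n-k)/n}$ together with $k(n-k)/n \ge 3k/4$ (valid since $k \le n/4$) and $3/4 > \ln 2$. Combining the two contributions gives $P \le \lambda_M(k/(2n))^k + \lambda_M\lambda_C(k/(2n))^k = \lambda_M(\lambda_C+1)(k/(2n))^k$. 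The main obstacle is the coupling argument in step (ii): the correct way to decouple the winner-dependent crossover phase is to over-count by considering hypothetical crossovers from every mutant, which is legitimate precisely because a crossover success forces the actual winner to be good.
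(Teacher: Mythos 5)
Your proposal is correct and arrives at exactly the same intermediate bound as the paper, namely $\lambda_M p^k(1-p)^{n-k} + \lambda_M\lambda_C(pc)^k(1-pc)^{n-k}$, followed by the same maximization step $q^k(1-q)^{n-k}\le(\frac{k}{2n})^k$ applied to $q=p$ and $q=pc$; the binomial-theorem collapse of the sum over $\ell$ is also identical. Where you genuinely diverge is in how you handle the intermediate selection. The paper writes $P=\sum_\ell p_\ell\, p_M(\ell)\, p_C(\ell)$ exactly and performs a case analysis on $p_M(\ell)$: for $\ell\in[k+1..2k-1]$ a good mutant lies in the fitness valley, so \emph{all} $\lambda_M$ mutants must be good and $p_M(\ell)=(q_M(\ell))^{\lambda_M}$, whereas for $\ell=k$ or $\ell\ge 2k$ one good mutant suffices and $p_M(\ell)=1-(1-q_M(\ell))^{\lambda_M}$; both cases are then bounded by $\lambda_M q_M(\ell)$ via Bernoulli's inequality or the trivial estimate. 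You instead bypass the selection mechanism entirely: since a crossover offspring can equal $1^n$ only if the winner is good, a union bound over all $\lambda_M\lambda_C$ hypothetical (mutant, offspring) pairs immediately gives $\lambda_M\lambda_C\Pr[x^{(1)}\text{ good and }y^{(1,1)}=1^n]$. This is cleaner and more robust — it does not depend on knowing which mutant the valley structure forces to win — at the cost of needing the (easy but worth stating carefully) coupling that identifies $y^{(j)}$ with $y^{(i^*,j)}$, e.g., by fixing per-position crossover coins for each $j$. The paper's exact formula has the side benefit of making visible where the bound is lossy (the $(q_M)^{\lambda_M}$ regime), which the authors exploit when remarking that the true runtime under standard parameters could be better than their upper bound; your version hides this but is shorter. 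Your elementary verification of $(1-k/n)^{n-k}\le 2^{-k}$ via $e^{-3k/4}\le 2^{-k}$ uses $k\le n/4$ and is fine for the theorem's hypothesis, though slightly less general than the paper's Lemma~\ref{lem:maximizing-probability}, which only needs $n\ge 2k$.
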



Before we prove Theorem~\ref{thm:upper-bound-P} we introduce and prove the following auxiliary lemma. A similar (but less precise) result can be distilled from the proofs of Theorem 4.1 and Corollary 4.2 in~\cite{DoerrLMN17}, but we give a short proof \revise{which also delivers a more precise estimate}.

\begin{lemma}\label{lem:maximizing-probability}
  For all $n \in \N$, all $k \in [1..n - 1]$ and all $x \in [0, 1]$ we have $x^k (1 - x)^{n - k} \le (\frac{k}{n})^k (1 - \frac{k}{n})^{n - k}$. If $n \ge 2k$, then we also have
  $x^k (1 - x)^{n - k} \le (\frac{k}{2n})^k$.
\end{lemma}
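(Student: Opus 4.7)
The plan is to prove the two bounds in sequence, treating the first as a standard optimization of a polynomial and the second as a derived estimate at the maximizer.

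For the first inequality, I would treat $f(x) := x^k (1-x)^{n-k}$ as a continuous function on $[0,1]$. Since $k$ and $n - k$ are positive integers, $f$ vanishes at the endpoints and is strictly positive on $(0,1)$, so its maximum is attained in the interior. Differentiating gives $f'(x) = x^{k-1}(1-x)^{n-k-1}(k - nx)$, whose unique root in $(0,1)$ is $x = k/n$. Substituting back yields $(k/n)^k(1-k/n)^{n-k}$ as the maximum value, proving the first bound. This part is a routine calculus exercise.

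For the second inequality, under the hypothesis $n \ge 2k$ the first bound reduces the task to showing $(1 - k/n)^{n-k} \le 2^{-k}$. Setting $t := k/n \in (0, 1/2]$ and using $n-k = k(1-t)/t$, taking $k$-th roots transforms this to the uniform inequality $\phi(t) := (1-t)^{(1-t)/t} \le 1/2$ for all $t \in (0, 1/2]$, now independent of $k$ and $n$.

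The main technical step is establishing this last inequality. I would show that $\phi$ is monotonically increasing on $(0, 1/2]$ and then note that $\phi(1/2) = 1/2$ provides the boundary case with equality. Writing $\ln \phi(t) = (1-t)\ln(1-t)/t$ and applying the quotient rule, the numerator simplifies to $-(\ln(1-t) + t)$, which is strictly positive on $(0,1)$ since $\ln(1-t) < -t$ there; hence $(\ln \phi)'(t) > 0$ throughout $(0, 1/2]$. The main obstacle is the slightly delicate derivative simplification, but it is elementary. As a sanity check, the limit $\phi(t) \to 1/e$ as $t \to 0^+$ is comfortably below $1/2$, consistent with the monotonicity claim.
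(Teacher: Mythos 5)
Your proposal is correct and follows essentially the same route as the paper: the same derivative computation locating the maximizer at $x = k/n$, followed by the same reduction of the second bound to $(1-k/n)^{n/k-1} \le 1/2$ for $n \ge 2k$. The only difference is that you supply a full monotonicity argument for this last step, which the paper simply asserts; that is a welcome extra detail, not a deviation in approach.
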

\begin{proof}
  Consider the function $f(x) = x^k (1 - x)^{n - k}$. It is smooth in $[0, 1]$, therefore its maxima are in the boundaries of the interval or in the roots of its derivative. Since $f(0) = f(1) = 0$ and $f(x) > 0$ for all $0 < x < 1$, the boundary values cannot be maxima. We compute the derivative as follows.
  
  \begin{align*}
    f'(x) &= kx^{k - 1}(1 - x)^{n - k} - (n - k)x^k(1 - x)^{n - k - 1} \\
          &= x^{k - 1} (1 - x)^{n - k - 1} (k - nx).
  \end{align*}

  The roots of the derivative are in $x = 0$, $x = 1$ and $x = \frac{k}{n}$. Hence the maximum can be reached only in $x = \frac{k}{n}$, and the value of $f$ at this point is $f(\frac{k}{n}) = (\frac{k}{n})^k (1 - \frac{k}{n})^{n - k}$. If we also have $2k \le n$, then furthermore 

  \begin{align*}
    f(x) &\le f\left(\frac{k}{n}\right) = \left(\frac{k}{n}\right)^k \left(1 - \frac{k}{n}\right)^{n - k} = \left(\frac{k}{n}\right)^k \left(\left(1 - \frac{k}{n}\right)^{\frac{n}{k} - 1}\right)^k \\
    &\le \left(\frac{k}{n}\right)^k 2^{-k} = \left(\frac{k}{2n}\right)^k.
   \end{align*}
\end{proof}

Now we are in position to prove Theorem~\ref{thm:upper-bound-P}.

\begin{proof}[Proof of Theorem~\ref{thm:upper-bound-P}]

  We use the precise expression of the probability $P$ to go from the local to the global optimum in one iteration, which is 
  \begin{align}\label{eq:p_static}
      P = \sum_{\ell = 0}^n p_\ell p_M(\ell) p_C(\ell),
  \end{align}
  where $p_\ell$ is the probability to choose $\ell$ bits to flip, $p_M(\ell)$ is the probability of a successful mutation phase conditional on the chosen $\ell$, and $p_C(\ell)$ is the probability of a successful crossover phase conditional on the chosen $\ell$ and on the mutation phase being successful. In contrast \revise{to} the upper bounds, where we have shown a lower bound on the sum of the terms for $\ell \in [np..2np]$, now we aim at giving the upper bound on the whole sum.

  Since $\ell \sim \Bin(n, p)$, we have $p_\ell = \binom{n}{\ell}p^\ell(1 - p)^{n - \ell}$. The probability of a successful mutation phase depends on the chosen $\ell$. If $\ell < k$, then it is impossible to flip all $k$ zero-bits, hence $p_M(\ell) = 0$. For larger $\ell$ the probability to create a good offspring in a single application of the mutation operator is $q_M(\ell) = \binom{n - k}{\ell - k} / \binom{n}{\ell}$. If $\ell \in [k + 1..2k - 1]$\revise{,} then any good offspring occurs in the fitness valley and \revise{has worse fitness} than any other offspring that is not good. Hence, in order to have a successful mutation phase we need all $\lambda_M$ offspring to be good. Therefore, the probability of a successful mutation phase is $(q_M(\ell))^{\lambda_M}$. For $\ell = k$ and $\ell \ge 2k$ we are guaranteed to choose a good offspring as the winner of the mutation phase if there is at least one. Therefore, the mutation phase is successful with probability $p_M(\ell) = 1 - (1 - q_M(\ell))^{\lambda_M}$.

  If $\ell = k$ and the mutation phase is successful, it implies that the optimum is already found and hence we assume $p_C(k) = 1$. Otherwise, we can create a good offspring in the crossover phase only if $\ell > k$. For this we need to take all $k$ bits which are zero in $x$ from $x'$, and then take all $\ell - k$ one-bits which were flipped from $x$. The probability to do so in the creation of one offspring is $q_C(\ell) = c^k (1 - c)^{\ell - k}$. Since we create $\lambda_C$ offspring and at least one of them must have a fitness better than the fitness of $x$, the probability of a successful crossover phase is $p_C(\ell) = 1 - (1 - c^k(1 - c)^{\ell - k})^{\lambda_C}$.

  Putting these probabilities into~\eqref{eq:p_static} we obtain
  \begin{align}\label{eq:P}
    \begin{split}
      P &= \binom{n}{k} p^k (1 - p)^{n - k} \left(1 - \left(1 - \binom{n}{k}^{-1}\right)^{\lambda_M}\right) \\
      &+ \sum_{\ell = k + 1}^{2k - 1} \binom{n}{\ell} p^\ell (1 - p)^{n - \ell} \left(\frac{\binom{n - k}{\ell - k}}{\binom{n}{\ell}}\right)^{\lambda_M} \left(1 - (1 - c^k(1 - c)^{\ell - k})^{\lambda_C}\right) \\
      &+ \sum_{\ell = 2k}^n \binom{n}{\ell} p^\ell (1 - p)^{n - \ell} \left(1 - \left(1 - \frac{\binom{n - k}{\ell - k}}{\binom{n}{\ell}}\right)^{\lambda_M}\right) \left(1 - (1 - c^k(1 - c)^{\ell - k})^{\lambda_C}\right). \\
    \end{split}
  \end{align}

	
	Using Bernoulli's inequality $(1+x)^\lambda \ge 1+\lambda x$ valid for all $x \ge -1$ and $\lambda \ge 1$, we estimate the two terms of type $1 - (1 - p)^\lambda$ by $1 - (1 - p)^\lambda \le 1 - (1 - \lambda p) = \lambda p$. Note that an equivalent estimate could have been obtained by applying a union bound in the probabilistic setting that gave rise to these two expressions.

  We then note that, trivially, we have 
	\[
	\left(\frac{\binom{n - k}{\ell - k}}{\binom{n}{\ell}}\right)^{\lambda_M} \le \lambda_M \frac{\binom{n - k}{\ell - k}}{\binom{n}{\ell}},
	\]
	which allows to uniformly estimate the terms for $\ell \in [k+1..2k-1]$ and $\ell \ge 2k$. 
	
	With these two estimates, we obtain	

  \begin{align*}
    P &\le \binom{n}{k} p^k (1 - p)^{n - k} \lambda_M \binom{n}{k}^{-1} \\
    &+ \sum_{\ell = k + 1}^n \binom{n}{\ell} p^\ell (1 - p)^{n - \ell} \lambda_M \frac{\binom{n - k}{\ell - k}}{\binom{n}{\ell}} \lambda_C c^k(1 - c)^{\ell - k}. \\
  \end{align*}
  
  First, we consider the first term which corresponds to $\ell = k$.
  By Lemma~\ref{lem:maximizing-probability} we have 
  \begin{align}\label{eq:mutation-finds-opt}
    \binom{n}{k} p^k (1 - p)^{n - k} \lambda_M \binom{n}{k}^{-1} = \lambda_M p^k (1 - p)^{n - k} \le \lambda_M \left(\frac{k}{2n}\right)^k.
  \end{align}

  For the rest of the expression we argue as follows.
  \begin{align}\label{eq:pc}
    \begin{split}
      \sum_{\ell = k + 1}^n &\binom{n}{\ell} p^\ell (1 - p)^{n - \ell} \lambda_M \frac{\binom{n - k}{\ell - k}}{\binom{n}{\ell}} \lambda_C c^k(1 - c)^{\ell - k} \\
      &= \sum_{\ell = k + 1}^n \lambda_M \lambda_C \binom{n - k}{\ell - k} p^\ell (1 - p)^{n - \ell} c^k(1 - c)^{\ell - k} \\
      &= \lambda_M \lambda_C (1 - p)^{n - k} (pc)^k \sum_{\ell = k + 1}^n \binom{n - k}{\ell - k} \left(\frac{p(1 - c)}{1 - p}\right)^{\ell - k} \\
      &= \lambda_M \lambda_C (1 - p)^{n - k} (pc)^k \sum_{i = 1}^{n - k} \binom{n - k}{i} \left(\frac{p(1 - c)}{1 - p}\right)^{i} \\
      &\le \lambda_M \lambda_C (1 - p)^{n - k} (pc)^k \left(1 + \frac{p(1 - c)}{1 - p}\right)^{n - k} \\
      &= \lambda_M \lambda_C (pc)^k (1 - pc)^{n - k}.
    \end{split}
  \end{align}
  Since $pc \in [0, 1]$ and $2k \le n$, by Lemma~\ref{lem:maximizing-probability} this expression is at most $\lambda_M\lambda_C (\frac{k}{2n})^k$. Therefore, we conclude that

  \begin{align*}
    P \le \lambda_M \left(\frac{k}{2n}\right)^k + \lambda_M\lambda_C \left(\frac{k}{2n}\right)^k = \lambda_M (\lambda_C + 1) \left(\frac{k}{2n}\right)^k.
  \end{align*}

\end{proof}

Theorem~\ref{thm:upper-bound-P} lets us show in the following corollary that the parameters stated in Corollary~\ref{thm:optimal-jump-runtime} ($\lambda_M = \lambda_C = (\frac{n}{k})^{k/2}$ and $p = c = \sqrt{\frac{k}{n}}$) which minimize the upper bound give us an optimal runtime (apart from an $e^{\theta(k)}$ factor).

\begin{corollary}
  The expected runtime of the \ollga with any parameters on $\jump_k$ is at least $2(\frac{2n}{k})^{k/2} - 1$, if it starts in the local optimum of $\jump_k$. This is of the same asymptotic order (apart from an $e^{\theta(k)}$ factor) as the upper bound shown in Corollary~\ref{thm:optimal-jump-runtime} for the parameters $\lambda_M = \lambda_C = (\frac{n}{k})^{k/2}$ and $p = c = \sqrt{\frac{k}{n}}$
\end{corollary}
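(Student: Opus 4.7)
The plan is to combine the upper bound on the per-iteration success probability from Theorem~\ref{thm:upper-bound-P} with the elementary observation that each iteration of the \ollga uses exactly $\lambda_M + \lambda_C$ fitness evaluations. Since the algorithm is elitist and starts in the local optimum, the iteration in which the global optimum is first reached is geometrically distributed with success probability $P$, so $E[T_F] = (\lambda_M + \lambda_C)/P$ independently of $p$ and $c$.

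Next I would couple the two available estimates $P \le 1$ and $P \le \lambda_M(\lambda_C+1)(k/(2n))^k$ through a single AM--GM step. Writing $R := \lambda_M + \lambda_C$ and $N := (2n/k)^k$, the inequality $\lambda_M(\lambda_C+1) \le ((R+1)/2)^2$ gives
\begin{align*}
P \le \min\left\{1,\ \frac{(R+1)^2}{4N}\right\},
\end{align*}
so that $E[T_F] \ge \max\{R,\ 4NR/(R+1)^2\}$; note that $p$ and $c$ have fully disappeared from the lower bound.

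Finally I would minimise this max-expression over $R \ge 2$. A quick derivative calculation shows that $g(R) := 4NR/(R+1)^2$ is decreasing for $R > 1$, so on the regime $R \le 2\sqrt{N} - 1$ we have $g(R) \ge g(2\sqrt{N} - 1) = 2\sqrt{N} - 1$, while for $R \ge 2\sqrt{N} - 1$ the branch $E[T_F] \ge R$ already reaches the same value. In both cases $E[T_F] \ge 2\sqrt{N} - 1 = 2(2n/k)^{k/2} - 1$ for every valid parameter choice, which is the claimed lower bound. Rewriting $2(2n/k)^{k/2} - 1 = (n/k)^{k/2}\, e^{\Theta(k)}$ shows that it matches the upper bound $(n/k)^{k/2}\, e^{O(k)}$ of Corollary~\ref{thm:optimal-jump-runtime} up to the hidden $e^{O(k)}$ factor. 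All substantive work sits in Theorem~\ref{thm:upper-bound-P}; what remains is one AM--GM application and a one-variable optimisation, so no real obstacle is expected here.
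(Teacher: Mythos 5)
Your proposal is correct and follows essentially the same route as the paper: both combine the bound $P \le \min\{1, \lambda_M(\lambda_C+1)(k/(2n))^k\}$ from Theorem~\ref{thm:upper-bound-P} with $E[T_F] \ge (\lambda_M+\lambda_C)/P$ and then minimize the resulting expression over the population sizes. The only difference is cosmetic: you collapse the two-parameter minimization into a single-variable one via AM--GM ($\lambda_M(\lambda_C+1)\le((R+1)/2)^2$ with $R=\lambda_M+\lambda_C$), whereas the paper optimizes sequentially over $\lambda_M$ and then $\lambda_C$; both yield exactly $2(2n/k)^{k/2}-1$.
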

\begin{proof}
  Since the probability $P$ to find the global optimum in one iteration is at most $1$, by Theorem~\ref{thm:upper-bound-P} we have
  \begin{align}\label{eq:lower-bound-runtime}
    E[T_F] \ge \frac{\lambda_M + \lambda_C}{P} \ge \max\left\{\lambda_M + \lambda_C, \frac{\lambda_M + \lambda_C}{\lambda_M(\lambda_C + 1) \left(\frac{k}{2n}\right)^k} \right\}. 
  \end{align}
  Consider the arguments of the maximum as functions of $\lambda_M$ (and fix all other parameters). Then $(\lambda_M + \lambda_C)$ is strictly increasing in $\lambda_M$, while 
  \begin{align*}
    \frac{\lambda_M + \lambda_C}{\lambda_M(\lambda_C + 1) \left(\frac{k}{2n}\right)^k} = \frac{\lambda_C}{(\lambda_C + 1) \left(\frac{k}{2n}\right)^k} + \frac{\lambda_C}{\lambda_M(\lambda_C + 1) \left(\frac{k}{2n}\right)^k}
  \end{align*}
  is strictly decreasing. Therefore, the maximum in~\eqref{eq:lower-bound-runtime} is minimized when both its arguments are equal. This condition is satisfied only when $\lambda_M = (\frac{2n}{k})^k (\lambda_C + 1)^{-1}$, which yields the following lower bound on the runtime.
  \begin{align*}
    E[T_F] \ge \lambda_M + \lambda_C \ge \lambda_C + \frac{\left(\frac{2n}{k}\right)^k}{\lambda_C + 1}. 
  \end{align*}
  Considering this as a function of $\lambda_C$ and studying its derivative one can see that this lower bound is minimized when $\lambda_C = (\frac{2n}{k})^{k/2} - 1$. This implies that the minimal lower bound on the runtime is reached with $\lambda_M = (\frac{2n}{k})^k (\lambda_C + 1)^{-1} = (\frac{2n}{k})^{k/2}$ and is equal to
  \begin{align*}
    E[T_F] &\ge \lambda_M + \lambda_C = \left(\frac{2n}{k}\right)^{k/2} + \left(\frac{2n}{k}\right)^{k/2} - 1 \\ 
           &= 2 \left(\frac{2n}{k}\right)^{k/2} - 1 = \left(\frac{n}{k}\right)^{k/2} e^{\Theta(k)}.
  \end{align*}
  This is of the same asymptotical order (apart from an $e^{\theta(k)}$ factor) as the upper bound $(\frac{n}{k})^{k/2} e^{\Theta(k)}$ for parameters $\lambda_M = \lambda_C = (\frac{n}{k})^{k/2}$ and $p = c = \sqrt{\frac{k}{n}}$ shown in Corollary~\ref{thm:optimal-jump-runtime}.

\end{proof}

In this section, we have shown that we determined in Corollary~\ref{thm:optimal-jump-runtime} an asymptotically optimal parameter setting for leaving the local optimum of jump functions. We leave open the question if we also used the best parameters in Theorem~\ref{thm:optimal-runtime}, that is, when starting with a random solution. This is also an interesting question, but it is harder to answer due to the trade-off between optimizing the \onemax-type part of the optimization process and the part leaving the local optimum. In addition, we believe that the question we did answer, how to optimally leave the local optimum, is more interesting from the application point of view. 
\revise{We could imagine that when solving several similar instances of a difficult optimization problem, just by analyzing the optimization processes, one can obtain a rough estimate on the number of bits that need to be flipped to leave the hardest local optima.}
With this information, the parameters determined in Corollary~\ref{thm:optimal-jump-runtime} would be a reasonable starting point for optimizing the parameters of the algorithm. \revise{In constrast to this}, we doubt that in a practical problem one is able to understand both the structure of the local optima and the easy parts of the fitness landscape sufficiently well that then a trade-off as done in Section~\ref{sec:total-runtime} could reasonably be obtained.

\section{Conclusion}

In this first runtime analysis of the \ollga on a multimodal problem, we observed that this algorithm also has a runtime advantage over classic algorithms on multimodal objective functions, and a much more pronounced one. Whereas the advantage in the previous results on unimodal problems was a gain of a logarithmic factor, we have shown here a runtime that is almost the square root of the runtime of classic algorithms.

For the \ollga to show such a good performance, its parameters have to be chosen differently from what was suggested in previous works, in particular, the mutation rate and crossover bias have to be larger. We developed some general suggestions (at the end of Section~\ref{sec:jump-to-global}) that might ease the future use of this algorithm.

Being the first runtime analysis on a multimodal problem, this work leaves a number of questions unanswered. To highlight one of them, we note that we have not proven a matching lower bound for our runtime result. Lower bounds for algorithms with several parameters can be technically demanding as the corresponding analysis~\cite[Section~5]{DoerrD18} of the \ollga on \onemax shows. Hence such a result, despite desirable and possibly also indicating better upper bounds, is beyond the scope of this paper. 

\section*{Acknowledgements}
This work was supported by a public grant as part of the Investissement d'avenir project, reference ANR-11-LABX-0056-LMH, LabEx LMH and by RFBR and CNRS, project number 20-51-15009.

\newcommand{\etalchar}[1]{$^{#1}$}


}

\end{document}